\documentclass[10pt,twocolumn,letterpaper]{article}

\usepackage{cvpr}
\usepackage{times}
\usepackage{epsfig}
\usepackage{graphicx}
\usepackage{amsmath}
\usepackage{amssymb}



\usepackage{color}
\usepackage[linesnumbered,lined,boxed,commentsnumbered,ruled,vlined]{algorithm2e}
\definecolor{MyAlgCommentColor}{RGB}{31,91,155}
\def\MyAlgComment#1{{\hspace{5pt}\color{MyAlgCommentColor}$\triangleright$ \it #1}}
\let\oldnl\nl
\newcommand{\nonl}{\renewcommand{\nl}{\let\nl\oldnl}}\SetKwFor{Function}{function}{}{}

\newcommand{\comment}[1]{}

\newcommand{\argmax}{\operatornamewithlimits{argmax}}

\long\def\ignore#1{}
\def\myparagraph#1{\vspace{4pt}\noindent{\bf #1~~}}

\def\calN{{\cal N}}
\def\calP{{\cal P}}

\newcommand{\bR}{\mathbf{R}}

\newcommand{\bw}{\mathbf{w}}

\newcommand{\bx}{\mathbf{x}}
\newcommand{\by}{\mathbf{y}}
\newcommand{\sP}{\left|\cal{P}\right|}
\newcommand{\sN}{\left|\cal{N}\right|}

\def\Call#1#2{{{\tt #1}{(#2)}}}

\newtheorem{theorem}{Theorem}

\newtheorem{lemma}[theorem]{Lemma}

\newtheorem{proposition}[theorem]{Proposition}
\newtheorem{remark}{Remark}
\newtheorem{obs}{Observation}

\newenvironment{proof}[1][Proof] {\noindent {\em #1. }}{\hfill$\square$}


\usepackage[pagebackref=true,breaklinks=true,letterpaper=true,colorlinks,bookmarks=false]{hyperref}

\cvprfinalcopy 


\ifcvprfinal\pagestyle{empty}\fi
\begin{document}

\title{Efficient Optimization for Rank-based Loss Functions}

\author{
\begin{tabular}[t]{c@{\extracolsep{120pt}}c} 
Pritish Mohapatra\thanks{The first two authors contributed equally and can be reached at \emph{pritish.mohapatra@research.iiit.ac.in}, \emph{michal.rolinek@tuebingen.mpg.de} respectively.}  & Michal Rol\'inek\footnotemark[1]  \\ 
        IIIT Hyderabad & MPI T\"ubingen \\ \\
\end{tabular}
\\
\begin{tabular}[t]{c@{\extracolsep{40pt}}c@{\extracolsep{40pt}}c} 
C.V. Jawahar  & Vladimir Kolmogorov & M. Pawan Kumar \\
        IIIT Hyderabad & IST Austria & University of Oxford, Alan Turing Institute
\end{tabular}
}



\maketitle

\begin{abstract}
   The accuracy of information retrieval systems is often measured using complex loss functions such as the average precision ({\sc ap}) or the normalized discounted cumulative gain ({\sc ndcg}). Given a set of positive and negative samples, the parameters of a retrieval system can be estimated by minimizing these loss functions. However, the non-differentiability and non-decomposability of these loss functions does not allow for simple gradient based optimization algorithms. This issue is generally circumvented by either optimizing a structured hinge-loss upper bound to the loss function or by using asymptotic methods like the direct-loss minimization framework. Yet, the high computational complexity of loss-augmented inference, which is necessary for both the frameworks, prohibits its use in large training data sets. To alleviate this deficiency, we present a novel quicksort flavored algorithm for a large class of non-decomposable loss functions. We provide a complete characterization of the loss functions that are amenable to our algorithm, and show that it includes both {\sc ap} and {\sc ndcg} based loss functions. Furthermore, we prove that no comparison based algorithm can improve upon the computational complexity of our approach asymptotically. We demonstrate the effectiveness of our approach in the context of optimizing the structured hinge loss upper bound of {\sc ap} and {\sc ndcg} loss for learning models for a variety of vision tasks. We show that our approach provides significantly better results than simpler decomposable loss functions, while requiring a comparable training time.
\end{abstract}
\vspace*{-3mm}
\section{Introduction}

Information retrieval systems require us to rank a set of samples according to their relevance to a query. The risk of the predicted ranking is measured by a user-specified loss function. Several intuitive loss functions have been proposed in the literature. These include simple decomposable losses (that is, loss functions
that decompose over each training sample) such as 0-1 loss~\cite{lin2002support,morik1999combining} and the area under the ROC curve~\cite{bartell1994automatic,herschtal2004optimising}, as well as
the more complex non-decomposable losses (that is, loss functions that depend on the entire training data set)
such as the average precision ({\sc ap})~\cite{caruana2004ensemble,yuesigir07} and the normalized discounted cumulative gain ({\sc ndcg})~\cite{chakrabartikdd08}.

When learning a retrieval system, one can use a training objective that is agnostic to the risk, such as in the case of Lambda{\sc mart}~\cite{burges2007learning_LambdaRank}. In this work, we focus on approaches that explicitly take into account the loss function used to measure the risk. Such approaches can use any one of the many machine learning frameworks such as structured support vector machines ({\sc ssvm})~\cite{taskarnips03,tsochantaridisicml04},
deep neural networks~\cite{szegedynips13}, decision forests~\cite{kimmoml06}, or boosting~\cite{shenarxiv10}. To estimate the parameters of the framework, they employ a training objective that is closely related to the empirical risk computed over a large training data set. Specifically, it is common practice to employ either a structured hinge upper bound to the loss function \cite{chakrabartikdd08,yuesigir07}, or an asymptotic alternative such as direct loss minimization \cite{hazan2010direct_directLossStruct,song2016training_directLossDeep}.

The feasibility of both the structured hinge loss and the direct loss minimization approach depends on the computational efficiency of the {\em loss-augmented inference} procedure. When the loss function is decomposable, the loss-augmented inference problem can be solved efficiently by independently considering each training sample. However, for non-decomposable loss functions, it presents a
hard computational challenge. For example, given a training data set with $P$ positive (relevant to the query) and $N$ negative (not relevant to the query) samples, the best known algorithms for loss-augmented inference for {\sc ap} and {\sc ndcg} loss functions have a complexity of $O(PN+N\log N)$~\cite{chakrabartikdd08,yuesigir07}. Since the number of negative samples $N$
can be very large in practice, this prohibits their use on large data sets.

In order to address the computational challenge of non-decomposable loss functions such as those based on {\sc ap} and {\sc ndcg}, we make three
contributions. First, we characterize a large class of ranking based loss functions that are amenable to
a novel quicksort flavored optimization algorithm for the corresponding loss-augmented inference problem.
We refer to the class of loss functions as {\em {\sc qs}-suitable}.
Second, we show that the {\sc ap} and the {\sc ndcg} loss functions are {\sc qs}-suitable, which allows us to reduce the
complexity of the corresponding loss-augmented inference to $O(N\log P)$. Third, we prove that there
cannot exist a comparison based method for loss-augmented inference that can provide a better asymptotic
complexity than our quicksort flavored approach. It is worth noting that our work is complementary to previous algorithms that have been proposed for the {\sc ap} based loss functions~\cite{mohapatranips14}. Specifically, while the method of~\cite{mohapatranips14} cannot improve the asymptotic complexity of our loss-augmented inference algorithm, it can be used to reduce the runtime of a subroutine.

For the sake of clarity, we limit our discussion to the structured hinge loss upper bound to the loss function. However, as our main contribution is to speed-up loss-augmented inference, it is equally applicable to direct loss minimization. We demonstrate the efficacy of our approach on the challenging problems of action recognition, object detection and image classification, using publicly available data sets. Rather surprisingly, we show that in case of some models, parameter learning by optimizing complex non-decomposable {\sc ap} and {\sc ndcg} loss functions can be carried out faster than by optimizing simple decomposable 0-1 loss. Specifically, while each loss-augmented inference call is more expensive for
{\sc ap} and {\sc ndcg} loss functions, it can take fewer calls in practice to estimate the parameters of the corresponding model.

\section{Background}
\label{sec:backg}

We begin by providing a brief description of a general retrieval framework that employs a rank-based loss function, hereby referred to as the
ranking framework. Note that this framework is the same as or generalizes the ones employed in previous works~\cite{chakrabartikdd08,joachimsicml05,mohapatranips14,song2016training_directLossDeep,yuesigir07}.
The two specific instantiations of the ranking framework that are of interest to us employ the average precision ({\sc ap}) loss and the normalized
discounted cumulative gain ({\sc ndcg}) loss respectively. A detailed description of the two aforementioned loss functions is provided in the subsequent subsection.

\subsection{The Ranking Framework}

\myparagraph{Input.} The input to this framework is a set of $n$ samples, which we denote by
${\bf X} = \{{\bf x}_i,i=1,\dots,n\}$. For example, each sample can represent an image and a bounding box of a person present
in the image. In addition, we are also provided with a query, which in our example could represent an action such as `jumping'.
Each sample can either belong to the positive class (that is, the sample is relevant to the query) or the negative class (that is,
the sample is not relevant to the query). For example, if the query represents the action `jumping' then a sample is positive if
the corresponding person is performing the jumping action, and negative otherwise. The set of positive and the negative samples are denoted by $\cal{P}$ and $\cal{N}$ respectively. which we assume are provided during training, but are not known during testing.

\myparagraph{Output.}
Given a query and a set of $n$ samples {\bf X}, the desired output of the framework is a ranking of the samples according to their relevance to the query. This is often represented by a ranking matrix ${\bf R} \in \{-1,0,1\} ^ {n\times n}$ such that $\bR_{{\bf x},{\bf y}}=$ 1 if ${\bf x}$ is ranked higher than ${\bf y}$, -1 if ${\bf x}$ is ranked lower than ${\bf y}$ and 0 if ${\bf x}$ and ${\bf y}$ are ranked the same.
In other words, the matrix ${\bf R}$ is an anti-symmetric that represents the relative ranking of a pair of samples.

Given the sets $\cal{P}$ and $\cal{N}$ during training, we construct a ground truth ranking matrix ${\bf R}^*$, which ranks each positive sample above all
the negative samples. Formally, the ground truth ranking matrix ${\bf R}^*$ is defined such that ${\bf R}^*_{{\bf x},{\bf y}} = $ 1 if ${\bf x} \in \cal{P}$ and ${\bf y} \in {\cal N}$, -1 if ${\bf x} \in \cal{N}$ and ${\bf y} \in {\cal P}$, and 0 if ${\bf x},{\bf y} \in \cal{P}$ or ${\bf x},{\bf y} \in {\cal N}$. Note that the ground truth ranking matrix only defines a partial ordering on the samples since ${\bf R}^*_{i,j} = 0$ for all pairs of positive and negative
samples.
We will refer to rankings where no two samples are ranked equally as {\em proper rankings}. Without loss of generality, we will treat all rankings other than
the ground truth one as a proper ranking by breaking ties arbitrarily.

\myparagraph{Discriminant Function.}
Given an input set of samples {\bf X}, the discriminant function $F({\bf X},{\bf R};{\bf w})$ provides a score for any candidate ranking ${\bf R}$. Here, the term {\bf w} refers to the parameters of the discriminant function. We assume that the discriminant function is piecewise differentiable with respect to its parameters {\bf w}. One popular example of the discriminant function used throughout the ranking literature is the following:
\begin{equation}
F({\bf X},{\bf R};{\bf w}) = \frac{1}{\left|\cal P\right| \left| N \right|} \sum_{\bx \in \cal{P}} \sum_{\by \in \cal{N}} {\bf R}_{\bx,\by}(\phi(\bx; {\bf w}) - \phi(\by; {\bf w})).
\label{eq:jointFeature}
\end{equation}
Here, $\phi(\bx; {\bf w})$ is the score of an individual sample, which can be provided by a structured {\sc svm} or a deep neural network with parameters {\bf w}.

\myparagraph{Prediction.}
Given a discriminant function $F({\bf X}, {\bf R}; {\bf w})$ with parameters {\bf w}, the ranking of an input set of samples ${\bf X}$
is predicted by maximizing the score, that is, by solving the following optimization problem:
\begin{equation}
{\bf R}({\bf w}) = \argmax_{\bf R} F({\bf X},{\bf R};{\bf w}).
\label{eq:rankPrediction}
\end{equation}
The special form of the discriminant function in equation~(\ref{eq:jointFeature})
enables us to efficiently obtain the predicted ranking ${\bf R}({\bf w})$ by sorting the samples in descending order of their individual scores $\phi(\bx; {\bf w})$. We refer the reader to~\cite{joachimsicml05,yuesigir07} for details.

\myparagraph{Parameter Estimation.} We now turn towards estimating the parameters of our model given input samples ${\bf X}$, together
with their classification into positive and negative sets $\cal{P}$ and $\cal{N}$ respectively. To this end, we minimize the risk of prediction computed using a user-specified loss function $\Delta({\bf R}^*,{\bf R}({\bf w}))$, where ${\bf R}^*$ is the
ground truth ranking that is determined by $\cal{P}$ and $\cal{N}$ and ${\bf R}({\bf w})$ is the predicted ranking as shown in
equation~(\ref{eq:rankPrediction}). We estimate the parameters of our model as\vspace*{-2.5mm}
\begin{align}
\label{eq:empRiskMin}
{\bw}^* = \min_{\bw} \mathbb{E}[\Delta({\bf R}^*,{\bf R}({\bf w}))].
\end{align}
In the above equation, the expectation is taken with respect to the data distribution.

\myparagraph{Optimization for Parameter Estimation.} For many intuitive rank based loss functions such as {\sc ap} loss and {\sc ndcg} loss, owing to their non-differentiability and non-decomposability, problem (\ref{eq:empRiskMin}) can be difficult to solve using simple gradient based methods. One popular approach is to modify problem (\ref{eq:empRiskMin}) to instead minimize a structured hinge loss upper bound to the user-specified loss. We refer the reader to \cite{yuesigir07} for further details about this approach.

Formally, the model parameters can now be obtained by solving the following problem:\vspace*{-1mm}
\begin{align}
\label{eq:hingeUB_obj}
&~~~~~~~~~~~~~~~~~~~~~~~~~~~~~~~{\bw}^* = \min_{\bw} \mathbb{E}[J(\bw)] \\
&J(\bw) = \max_{\bf R} \Delta(\bR^*, \bR) + F({\bf X},{\bf R};{\bf w}) - F({\bf X},{\bf R}^*;{\bf w}) \nonumber
\end{align}
The function $J(\bw)$ in problem (\ref{eq:hingeUB_obj}) is continuous and piecewise differentiable, and is amenable to gradient based optimization. The semi-gradient \footnote{For a continuous function $f(x)$ defined on a domain of any generic dimension, we can define semi-gradient $\nabla_s f(x)$ to be a random picking from the set $\{\nabla f(t): ||x-t||<\epsilon\}$, for a sufficiently small $\epsilon$.} of $J(\bw)$ takes the following form: 
\begin{align}
\label{eq:hingeUB_grad}
&\nabla_{\bw} J(\bw) = \nabla_{\bw} F({\bf X}, \bar{\bR}; {\bw}) - \nabla_{\bw} F({\bf X}, {\bR}^*; {\bw}), \\
&\textrm{with},~~~\bar{\bR} = \argmax_{\bR} \Delta(\bR^*, \bR) + F({\bf X}, \bR; {\bw}). \label{eq:LAI}
\end{align}
Borrowing terminology from the structured prediction literature \cite{joachimsml09, yuesigir07}, we call $\bar{\bR}$ the {\em most violating ranking} and problem (\ref{eq:LAI}) as the {\em loss-augmented inference} problem. An efficient procedure for loss-augmented inference is key to solving problem (\ref{eq:hingeUB_obj}).

While we focus on using loss-augmented inference for estimating the semi-gradient, it can also be used as the cutting plane~\cite{joachimsml09} and the conditional gradient of the dual of problem (\ref{eq:hingeUB_obj}). In addition to this, loss-augmented inference is also required for solving problem (\ref{eq:empRiskMin}) using the direct loss minimization framework \cite{song2016training_directLossDeep}.

\subsection{Loss Functions}

While solving problem (\ref{eq:LAI}) is non-trivial, especially for non-decomposable loss functions, the method we propose in this paper allows for an efficient loss-augmented inference procedure for such complex loss functions. For our discussion, we focus on two specific non-decomposable loss functions. The first is the average precision ({\sc ap}) loss, which
is very popular in the computer vision community as evidenced by its use in the various challenges of {\sc pascal voc}~\cite{everinghamijcv10}.
The second is the normalized discounted cumulative gain ({\sc ndcg}) loss, which is very popular in the information
retrieval community~\cite{chakrabartikdd08}.

\myparagraph{Notation.} In order to specify the loss functions, and our efficient algorithms for problem~(\ref{eq:hingeUB_obj}), it would be
helpful to introduce some additional notation. We define $ind(\bx)$ to be the index of a sample $\bx$ according to the ranking $\bR$. Note that the notation does not explicitly depend on $\bR$ as the ranking will always be clear from context.
If $\bx \in \cal{P}$ (that is, for a positive
sample), we define $ind^+(\bx)$ as the index of $\bx$ in the total order of positive samples induced by $\bR$. For example, if $\bx$ is the highest ranked
positive sample then $ind^+(\bx) = 1$ even though $ind(\bx)$ need not necessarily be 1 (in the case where some negative samples are ranked higher than $\bx$).
For a negative sample $\bx \in \cal{N}$, we define
$ind^-(\bx)$ analogously: $ind^-(\bx)$ is the index of $\bx$ in the total order of negative samples induced by $\bR$.

\myparagraph{AP Loss.} Using the above notation, we can now concisely define the average precision ({\sc  ap}) loss of a proper ranking $\bR$ given
the ground truth ranking $\bR^*$ as follows:\vspace*{-1mm}
$$\Delta_{AP}(\bR^*, \bR) = 1 - \frac{1}{\sP} \sum_{\bx \in {\cal P}} \frac{ind^+(\bx)}{ind(\bx)}.$$\vspace*{-1mm}
For example, consider an input ${\bf X} = \{\bx_1,\cdots,\bx_8\}$ where $\bx_i \in \cal{P}$ for $1 \leq i \leq 4$, and
$\bx_i \in \cal{N}$ for $5 \leq i \leq 8$, that is, the first 4 samples are positive while the last 4 samples
are negative. If the proper ranking $\bR$ induces the order\vspace*{-1mm}
\begin{equation}
(\bx_1,\bx_3,\bx_8,\bx_4,\bx_5,\bx_2,\bx_6,\bx_7),
\label{eq:exampleOrder}
\end{equation}\vspace*{-3mm}
$$\text{then},~~\Delta_{AP}(\bR^*, \bR) = 1 - \frac{1}{4} \left(\frac{1}{1} + \frac{2}{2} + \frac{3}{4} + \frac{4}{6} \right) \approx 0.146.$$

\myparagraph{NDCG Loss.} We define a discount \footnote{Chakrabarti {\em et al. }\cite{chakrabartikdd08} use a slightly modified definition of the discount $D(\cdot)$. For a detailed discussion about it the reader can refer to the Appendix (Supplementary).} $D(i) = 1/\log_2(1+i)$ for all $i = 1,\cdots, |\cal{N}| + |\cal{P}|$. This allows us to obtain a loss
function based on the normalized discounted cumulative gain as \vspace*{-4mm}
$$\Delta_{NDCG}(\bR^*, \bR) = 1-\frac{\sum_{\bx \in {\cal P}} D(ind(\bx))}{\sum_{i = 1}^{\sP} D(i)}.$$
For example, consider the aforementioned input where the first four samples are positive and the last four samples are negative. For
the ranking $\bR$ that induces the order~(\ref{eq:exampleOrder}), we can compute \vspace*{-2.5mm}
\begin{eqnarray}
\Delta_{{\scriptstyle NDCG}}(\hat{\bR}, \bR) = 1 - \frac{1 + \log_2^{-1}3 + \log_2^{-1}5+\log_2^{-1}7}{1 + \log_2^{-1}3 + \log_2^{-1}4+\log_2^{-1}5} \nonumber \\
 \approx 0.056. \nonumber
\end{eqnarray}
\vspace{-7mm}

\noindent Both {\sc ap} loss and {\sc ndcg} loss are functions of the entire dataset and are not decomposable onto individual samples.

\section{Quicksort Flavored Optimization}
\label{sec:qs}

\def\LeftInd{\ensuremath{\ell^-}}
\def\RightInd{\ensuremath{r^-}}
\def\Left{{\ensuremath{\ell^+}}}
\def\Right{{\ensuremath{r^+}}}

\def\Med{{\ensuremath{m}}}


In order to estimate the parameters $\bw$ in the ranking framework by solving problem (\ref{eq:hingeUB_obj}), we need to compute the semi-gradient of $J(\bw)$. To this end, given the current estimate of parameters {\bf w}, as well as a set of samples ${\bf X}$, we are interested in obtaining the most violated ranking
by solving problem~(\ref{eq:LAI}). At first glance, the problem seems to require us to obtain a ranking matrix $\bar{\bf R}$. However, it
turns out that we do not explicitly require a ranking matrix. 

In more detail, our algorithm uses an intermediate representation of the ranking using the notion of interleaving ranks. Given a ranking ${\bf R}$ and
a negative sample $\bx$, the interleaving rank $rank(\bx)$ is defined as one plus the number of positive samples preceding $\bx$ in $\bR$. Note that,
similar to our notation for $ind(\cdot)$, $ind^+(\cdot)$ and $ind^-(\cdot)$, we have dropped the dependency of $rank(\cdot)$ on ${\bf R}$ as the ranking matrix would be
clear from context. The interleaving rank of all the samples does not specify the total ordering of all the samples according to $\bR$ as it ignores the
relative ranking of the positive samples among themselves, and the relative ranking of the negative samples among themselves. However, as will be seen shortly, for
a large class of ranking based loss functions, interleaving ranks corresponding to the most violating ranking are sufficient to compute the semi-gradient as in equation (\ref{eq:hingeUB_grad}).

In the rest of the section, we discuss the class of loss functions that are amenable to a quicksort flavored algorithm, which we call {\sc qs}-suitable loss functions. We then describe and analyze our quicksort flavored approach for finding the interleaving rank in some detail. For brevity and simplicity of exposition, in the following sub-section, we restrict our discussion to the properties of {\sc qs}-suitable loss functions that are necessary for an intuitive explanation of our algorithm. For a thorough discussion on the characterization and properties of {\sc qs}-suitable loss functions, we refer the interested reader to the full version of the paper \cite{mohapatra2016efficient}.

\subsection{QS-Suitable Loss Functions}

As discussed earlier, many popular rank-based loss functions happen to be non-decomposable. That is, they can not be additively decomposed onto individual samples. However, it turns out that a wide class of such non-decomposable loss functions can be instead additively decomposed onto the negative samples. Formally, for some functions $\delta_j \colon \{1, \dots, \sP+1\} \to \mathbb{R}$ for $j = 1, \dots, \sN$, for a proper ranking $\bR$ one can write \vspace{-2mm}
$$\Delta(\bR^*, \bR) = \sum_{\bx \in \calN} \delta_{ind^-(\bx)}(rank(\bx)).\vspace{-2mm}$$ 
We will call this the {\em negative-decomposability} property. 

Further, many of those rank-based loss functions do not depend on the relative order of positive or negative samples among themselves. Rather, the loss for a ranking $\bR$, $\Delta(\bR^*, \bR)$, depends only on the interleaving rank of positive and negative samples corresponding to $\bR$. We will call this the {\em interleaving-dependence} property.

As will be evident later in the section, the above properties in a loss function allows for an efficient quicksort flavored divide and conquer algorithm to solve the loss augmented problem. We formally define the class of loss functions that allow for such a quicksort flavored algorithm as {\em {\sc qs}-suitable} loss functions. The following proposition establishes the usefulness for such a characterization.
\vspace{-1mm}
\begin{proposition}\label{prop:suitable} Both $\Delta_{AP}$ and $\Delta_{NDCG}$ are QS-suitable.
\end{proposition}
\vspace{-1mm}
The proof of the above proposition is provided in Appendix (supplementary). Having established that both the {\sc ap} and the {\sc ndcg} loss are {\sc qs}-suitable, the rest of the section will deal with a general {\sc qs}-suitable loss function. A reader who is interested in employing another loss function need only check whether the required conditions are satisfied in order to use our approach.

\subsection{Key Observations for QS-Suitable Loss}
\label{subsec:keyObservations}

Before describing our algorithm in detail, we first provide some key observations which enable efficient optimization for {\sc qs}-suitable loss functions. To this end, let us define an array $\{s^+_i\}_{i=1}^{\sP}$ of positive sample scores and an array $\{s^-_i\}_{i=1}^{\sN}$ of negative sample scores. Furthermore, for purely notational purposes, let $\{ s^*_i\}$ be the array $\{ s^-_i\}$ sorted in descending order. For $j \in \{1, \dots \sN \}$ we denote the index of $s^-_j$ in $\{s^*_i\}$ as $j^*$.

With the above notation, we describe some key observations regarding {\sc qs}-suitable loss functions. Their proofs are for most part straightforward generalizations of results that appeared in \cite{mohapatranips14,yuesigir07} in the context of the {\sc ap} loss and can be found in Appendix (Supplementary). Using the interleaving-dependence property of {\sc qs}-suitable loss functions and structure of the discriminant function as defined in equation (\ref{eq:jointFeature}), we can make the following observation.
\vspace{-1mm}
\begin{obs}\label{obs:sorted} An optimal solution $\bar{\bR}$ of problem~(\ref{eq:LAI}) would have positive samples appearing in the descending order of their scores $s^+_i$ and also the negative samples appearing in descending order of their scores $s^-_i$.
\end{obs}
\vspace{-1mm}
Now, in order to find the optimal ranking $\bar{\bR}$, it would seem natural to sort the arrays $\{s^+_i\}$ and $\{s^-_i\}$ in descending order and then find the optimal interleaving ranks $rank({\bf x})$ for all ${\bf x}\in \calN$. However, we are aiming for complexity below $O(\sN \log \sN)$, therefore we can not afford to sort the negative scores. On the other hand, since $\sP << \sN$, we are allowed to sort the array of positive scores $\{s^+_i\}$.

Let $opt_i$ be the optimal interleaving rank for the negative sample with the $i^{th}$ rank in the sorted list $\{s_i^*\}$ and $\mathbf{opt} = \{opt_i| j=1, \dots, \sN\}$ be the optimal interleaving rank vector. A certain subtle monotonicity property QS-suitable loss functions (see Supplementary) and the structure of the discriminant function given in (\ref{eq:jointFeature}) gives us the opportunity
to compute the interleaving rank for each negative sample independently. However, we actually need not do this computation for all the $|\calN|$ negative samples. This is because, since the interleaving rank for any negative sample can only belong to $[1,|\calP|+1]$ and $\sP << \sN$, many of the negative samples would have the same interleaving rank. This fact can be leveraged to improve the efficiency of the algorithm for finding $\mathbf{opt}$ by making use of the following observation.
\vspace{-1mm}
\begin{obs}\label{obs:greedy} If $i < j$, then $opt_i \leq opt_j$.
\end{obs}
\vspace{-1mm}
Knowing that $opt_i = opt_j$ for some $i < j$, we can conclude that $opt_i = opt_k = opt_j$ for each $i < k < j$. This provides a cheap way to compute some parts of the vector ${\bf opt}$ if an appropriate sequence is followed for computing the interleaving ranks. Even without access to the fully sorted set $\{s^*_j\}$, we can still find $s^*_j$, the $j$-highest element in $\{s^-_i\}$, for a fixed $j$, in $O( \sN)$ time. This would lead to an $O(\sP \sN)$ algorithm but we may at each step modify $\{s^-_i\}$ slowly introducing the correct order. This will make the future searches for $s^*_j$ more efficient.
 
\subsection{Divide and Conquer}
\label{subsec:divideConquer}


Algorithm \ref{alg:optranks} describes the main steps of our approach. Briefly, we 
begin by detecting $s^*_{\sN/2}$ that is the median score among the negative samples. We use this to compute $opt_{\sN/2}$. Given $opt_{\sN/2}$, we know that for all $j < \sN/2$, $opt_j \in [1,opt_{\sN/2}]$ and for all $j > \sN/2$, $opt_j \in [opt_{\sN/2}, \sP+1]$. This observation allows us to employ a divide-and-conquer recursive approach.

In more detail, we use two classical linear time array manipulating procedures {\sc Median} and {\sc Select}.
The first one outputs the index of the median element. The second one takes as its input an index of a particular element $x$. It rearranges the array such that $x$ separates higher-ranked elements from lower-ranked elements (in some total order).  For example, if array $s^-$ contains six scores $[a\;b\; 4.5 \; 6 \; 1\; c]$
then \Call{Median}{$3$, $5$} would return $3$ (the index of score $4.5$), while calling \Call{Select}{$3$, $3$, $5$} would rearrange the array to $[a\;b\; 1\; 4.5 \; 6 \; c]$ and return $4$ (the new index of $4.5$). The {\sc Select} procedure is a subroutine of the classical {\sc quicksort} algorithm. 

Using the two aforementioned procedures in conjunction with the divide-and-conquer strategy allows us to compute the entire interleaving rank vector ${\bf opt}$ and this in turn allows us to compute the semi-gradient $\nabla_{\bw} J(\bw)$, as in equation~(\ref{eq:hingeUB_grad}), efficiently.

\begin{algorithm}[t]
\caption{Recursive procedure for finding all interleaving ranks.} \label{alg:optranks}
\SetAlgoLined
\DontPrintSemicolon

\nonl 
{\bf Description:} The function finds optimal interleaving rank for all 
 $i\in[\LeftInd,\RightInd]$
given that \\
(i) array $s^-$ is partially sorted, namely
${\tt MAX}(s^-[1\dots\LeftInd-1]) \le {\tt MIN}(s^-[\LeftInd\dots\RightInd])$
and
${\tt MAX}(s^-[\LeftInd\dots\RightInd]) \le {\tt MIN}(s^-[\RightInd+1\dots\sN])$; \\
\nonl
(ii) optimal interleaving ranks for  $i\in[\LeftInd,\RightInd]$ lie in the interval $[\Left, \Right]$.
\vspace{3pt}\;

\Function{\em\Call{OptRanks}{int \LeftInd, int \RightInd, int \Left, int \Right}}{
\If{$\Left = \Right$}{set $opt_i = \Left$ for each $i\in[\LeftInd,\RightInd]$ and return}
$\Med =$ \Call{Median}{$\LeftInd$, $\RightInd$} \MyAlgComment{gives the index of the \nonl\\\hspace{15pt} median score in a subarray of $s^-$}\;
$\Med =$ \Call{Select}{$\Med$, $\LeftInd$, $\RightInd$} \MyAlgComment{splits the subarray \nonl\\\hspace{15pt} by $s=s^-[\Med]$, returns the new index of $s$}\;
Find $opt_{\Med}$ by trying all options in $[\Left, \Right]$\;
{\bf if} $\LeftInd < \Med$ {\bf then} \Call{OptRanks}{\LeftInd, \Med$-1$, \Left, $opt_{\Med}$}\;
{\bf if} $\Med < \RightInd$ {\bf then} \Call{OptRanks}{\Med$+1$, \RightInd, $opt_{\Med}$, \Right}
}
\end{algorithm}

Figure~\ref{fig:qs_flavored_algo_eg} provides an illustrative example of our divided-and-conquer strategy. Here, $|\calN|=11$ and $|\calP|=2$. We assume that the optimal interleaving rank vector $\mathbf{opt}$ is $[1,2,2,2,2,2,2,2,2,3,3]$. Let us now go through the procedure in which Algorithm~\ref{alg:optranks} computes this optimal interleaving rank vector. Before starting the recursive procedure, we only sort the positive samples according to their scores and do not sort the negative samples. To start with, we call \Call{OptRanks}{$1,11,1,3$}. We find the negative sample with the median score ($6^{th}$ highest in this case) and compute its optimal interleaving rank $opt_6$ to be 2. In the next step of the recursion, we make the following calls: \Call{OptRanks}{$1,5,1,2$} and \Call{OptRanks}{$7,11,2,3$}. These calls compute $opt_3$ and $opt_9$ to be 2. In the next set of recursion calls however, the calls \Call{OptRanks}{$4,5,2,2$} and \Call{OptRanks}{$7,8,2,2$}, get terminated in step 4 of Algorithm~\ref{alg:optranks} and $opt_j$ for $j={4,5,7,8}$ are assigned without any additional computation. We then continue this procedure recursively for progressively smaller intervals as described in Algorithm~\ref{alg:optranks}. Leveraging the fact stated in observation~\ref{obs:greedy}, our algorithm has to explicitly compute the interleaving rank for only 6 (shown in square brackets) out of the 11 negative samples. In a typical real data set, which is skewed more in favor of the negative samples, the expected number of negative samples for which is the interleaving rank has to be explicitly computed is far less than $|\calN|$. In contrast, the algorithm proposed by Yue {\em et al.\ }in \cite{yuesigir07} first sorts the entire negative set in descending order
of their scores and explicitly computes the interleaving rank for each of the $|\calN|$ negative samples.

\begin{figure}[t]
\begin{center}
\begin{tabular}{ c c c c c c c c c c c }
  \_ & \_ & \_ & \_ & \_ & \_ & \_ & \_ & \_ & \_ & \_ \\
  \_ & \_ & \_ & \_ & \_ & \underline{[2]} & \_ & \_ & \_ & \_ & \_ \\
  \_ & \_ & \underline{[2]} & \_ & \_ & \underline{[2]} & \_ & \_ & \underline{[2]} & \_ & \_ \\
  \_ & \underline{[2]} & \underline{[2]} & \underline{2} & \underline{2} & \underline{[2]} & \underline{2} & \underline{2} & \underline{[2]} & \underline{[3]} & \_ \\
  \underline{[1]} & \underline{[2]} & \underline{[2]} & \underline{2} & \underline{2} & \underline{[2]} & \underline{2} & \underline{2} & \underline{[2]} & \underline{[3]} & \underline{3} \\
  &  &  &  &  &  &  &  &  &  &  \\
\end{tabular}
\caption{\em Example illustrating the path followed by the quick sort flavored recursive algorithm while computing the interleaving rank vector $\mathbf{opt}$. Row correspond to the status of $\mathbf{opt}$ at selected time steps.\vspace*{-7mm}}
\label{fig:qs_flavored_algo_eg}
\end{center}
\end{figure}
\vspace*{-0.5mm}
\subsection{Computational Complexity}
\label{subsec:complexity}
\vspace*{-0.5mm}
The computational complexity of the divide-and-conquer strategy to estimate the output of problem~(\ref{eq:LAI}), is given by the following theorem.
\vspace*{-1.5mm}
\begin{theorem}\label{alg} If $\Delta$ is QS-suitable, then the task \eqref{eq:LAI} can be solved in time $O(\sN \log\sP + \sP \log\sP + \sP \log\sN)$, which in the most common case $\sN > \sP$ reduces to $O(\sN \log\sP)$ and any comparison-based algorithm would require $\Omega(\left|\calN\right| \log \left|\calP\right|)$ operations.
\end{theorem}
\vspace*{-1mm}
\begin{proof}
Please refer to Appendix (Supplementary).
\end{proof}

Note that the above theorem not only establishes the superior runtime of our approach ($O(\sN \log\sP)$ compared to $O(\sN \log\sN)$ of \cite{yuesigir07} and \cite{mohapatranips14}), it also provides an asymptotic lower bound for comparison based algorithms. However, it does not rule out the possibility of improving the constants hidden within the asymptotic notation for a given loss function. For example, as mentioned earlier, one can exploit the additional structure of the {\sc ap} loss, as presented in \cite{mohapatranips14}, to further speed-up our algorithm.
\vspace*{-1mm}

\section{Experiments}
\label{sec:expt}

We demonstrate the efficacy of our approach on three vision tasks with increasing level of complexity. First, we use the simple experimental setup of doing action classification on the {\sc pascal voc} 2011 data set using a shallow model. This experimental set up allow us to thoroughly analyze the performance of our method as well as the baselines by varying the sample set sizes. Second, we apply our method to a large scale experiment of doing object detection on the {\sc pascal voc} 2007 data set using a shallow model. This demonstrates that our approach can be used in conjunction with a large data set consisting of millions of samples. Finally, we demonstrate the effectiveness of our method for layer wise training of a deep network on the task of image classification using the {\sc cifar}-10 data set.

\subsection{Action Classification}
\vspace*{-0.5mm}

\myparagraph{Data set.}
We use the {\sc pascal voc} 2011 \cite{everinghamijcv10} action classification data set for our experiments. This data set consists of 4846 images, which include
10 different action classes. The data set is divided into two parts: $3347$ `trainval' person bounding boxes
and $3363$ `test' person bounding boxes. We use the `trainval' bounding boxes for training since
their ground-truth action classes are known. We evaluate the accuracy of the different models on the `test' bounding boxes using the {\sc pascal} evaluation server.

\myparagraph{Model.}
We use structured {\sc svm} models as discriminant functions and use the standard poselet \cite{majicvpr11} activation features to define the sample feature for each person bounding box. The feature vector consists of 2400 action
poselet activations and 4 object detection scores. We refer the reader to~\cite{majicvpr11} for details regarding the feature vector.

\myparagraph{Methods.} We show the effectiveness of our method in optimizing both {\sc ap} loss and {\sc ndcg} loss to learn the model parameters. Specifically, we report the computational time for the loss-augmented inference evaluations. For {\sc ap} loss, we compare our method (referred to as {\sc ap\_qs}) with the loss-augmented inference procedure described in \cite{yuesigir07} (referred to as {\sc ap}). For {\sc ndcg} loss, we compare our method (referred to as {\sc ndcg\_qs}) with the loss-augmented inference procedure described in \cite{chakrabartikdd08} (referred to as {\sc ndcg}). We also report results for loss-augmented inference evaluations when using the simple decomposable 0-1 loss function (referred to as {\sc 0-1}). The hyperparameters involved are fixed using 5-fold cross-validation on the `trainval' set. 

\begin{table}[b]
\center{
\scriptsize
\begin{tabular}{|c|c|c|c|c|}
\hline
0-1 &  {\sc ap} & {\sc ap\_qs} &  {\sc ndcg} &  {\sc ndcg\_qs}\\
\hline
0.0694 & 0.7154 & 0.0625 & 6.8019 & 0.0473\\ 
\hline
\end{tabular}
}

\caption{\emph{Total computation time (in seconds) when using the different methods, for multiple calls to loss augmented inference during model training. The reported time is averaged over the training for all the action classes.}}
\label{table:ap_timing}
\end{table}

\begin{table}[b]
\center{
\scriptsize
\begin{tabular}{|c|c|c|c|c|}
\hline
0-1 &  {\sc ap} & {\sc ap\_qs} &  {\sc ndcg} &  {\sc ndcg\_qs}\\
\hline
0.48$\pm$0.03 & 16.29$\pm$0.18 & 1.48$\pm$0.39 & 71.07$\pm$1.57 & 0.55$\pm$0.11\\ 
\hline
\end{tabular}
}

\caption{\emph{Mean computation time (in milli-seconds) when using the different methods, for single call to loss augmented inference. The reported time is averaged over all training iterations and over all the action classes.}}
\label{table:ap_timing_per_iter}
\end{table}

\begin{figure*}
\begin{center}
\begin{tabular}{@{\hspace{0pt}}c@{\hspace{0pt}}c@{\hspace{0pt}}c}
\includegraphics[trim = 0mm 0mm 0mm 0mm, clip, width=0.33\linewidth]{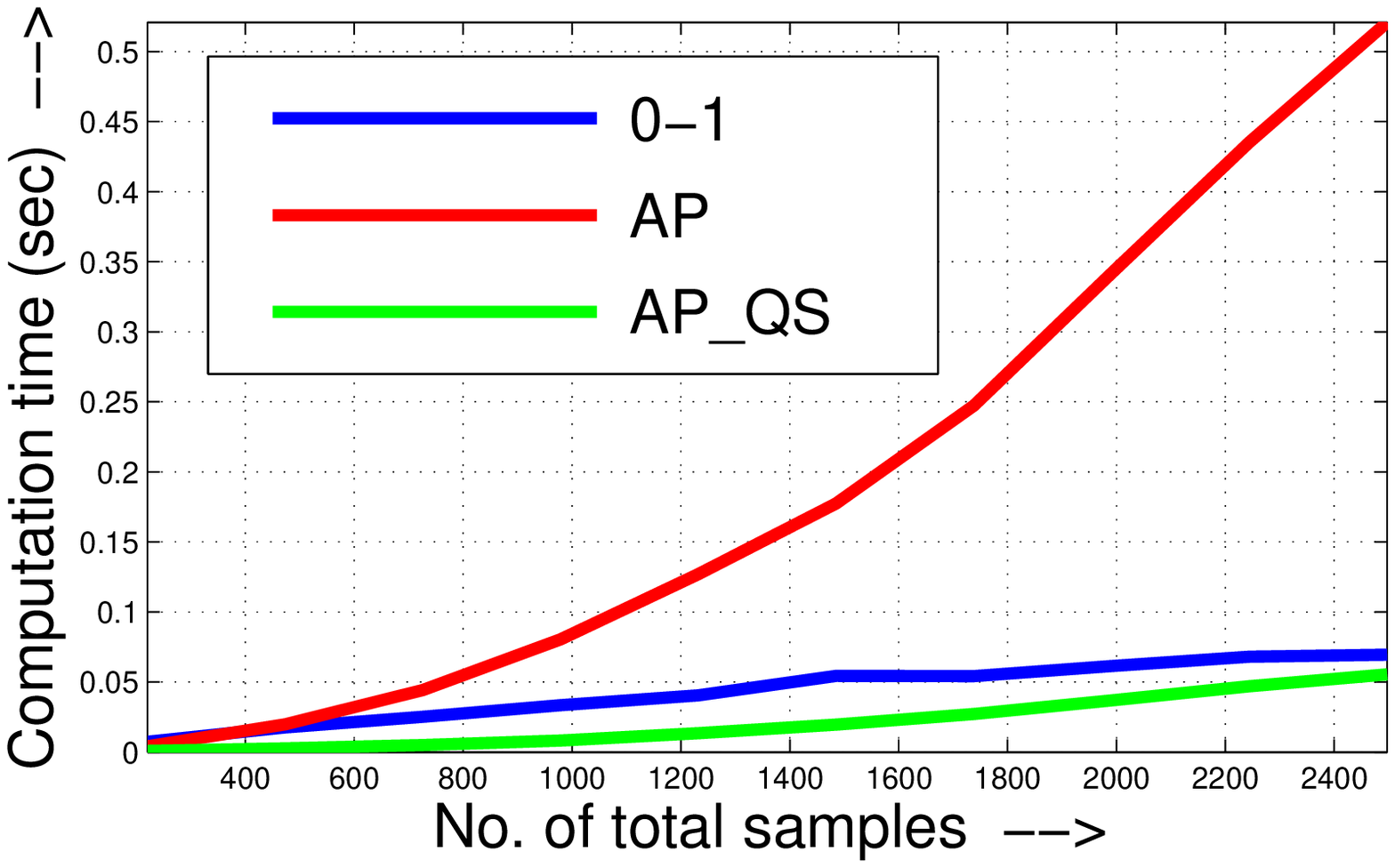}
\includegraphics[trim = 0mm 0mm 0mm 0mm, clip, width=0.33\linewidth]{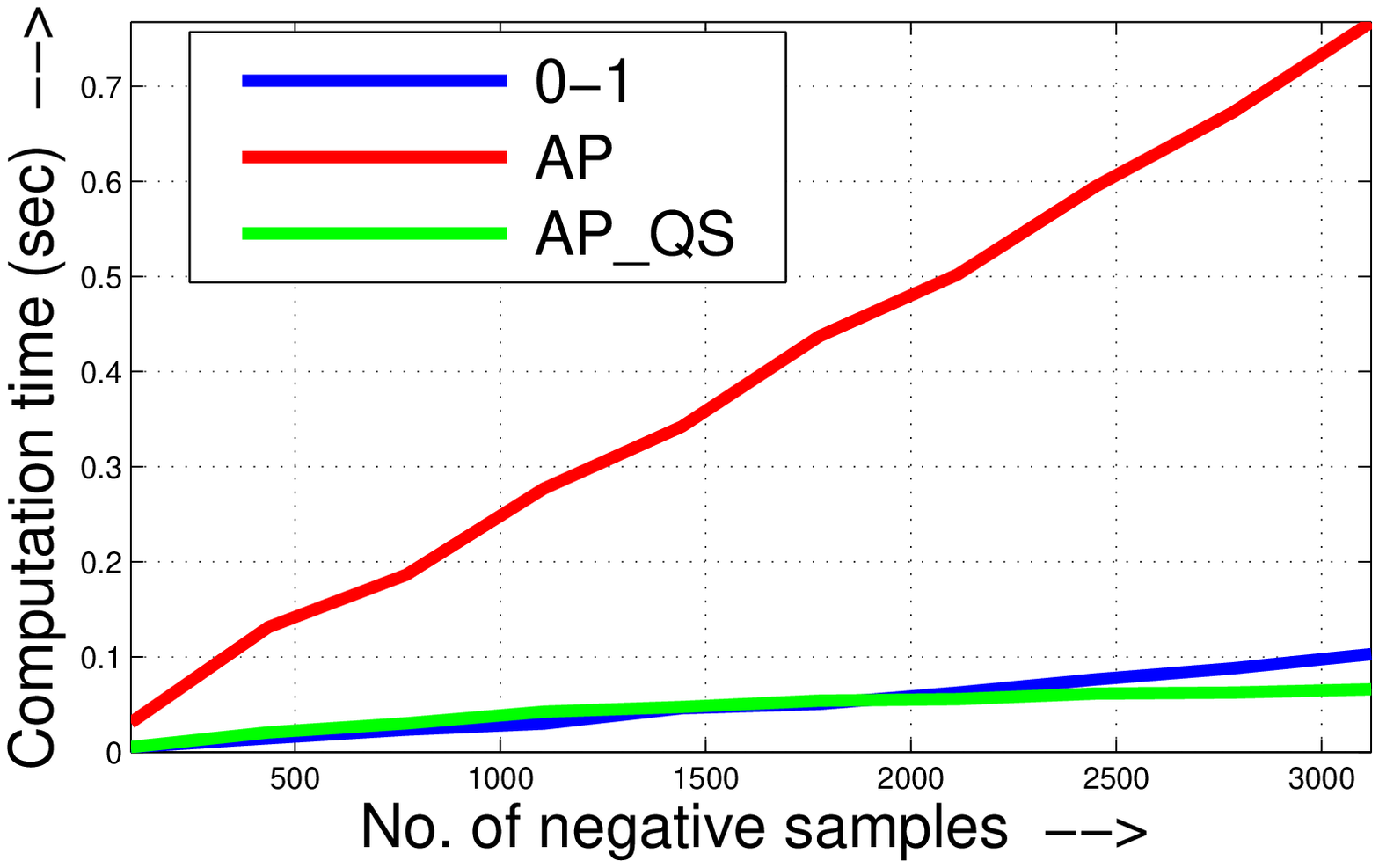} 
\includegraphics[trim = 0mm 0mm 0mm 0mm, clip, width=0.33\linewidth]{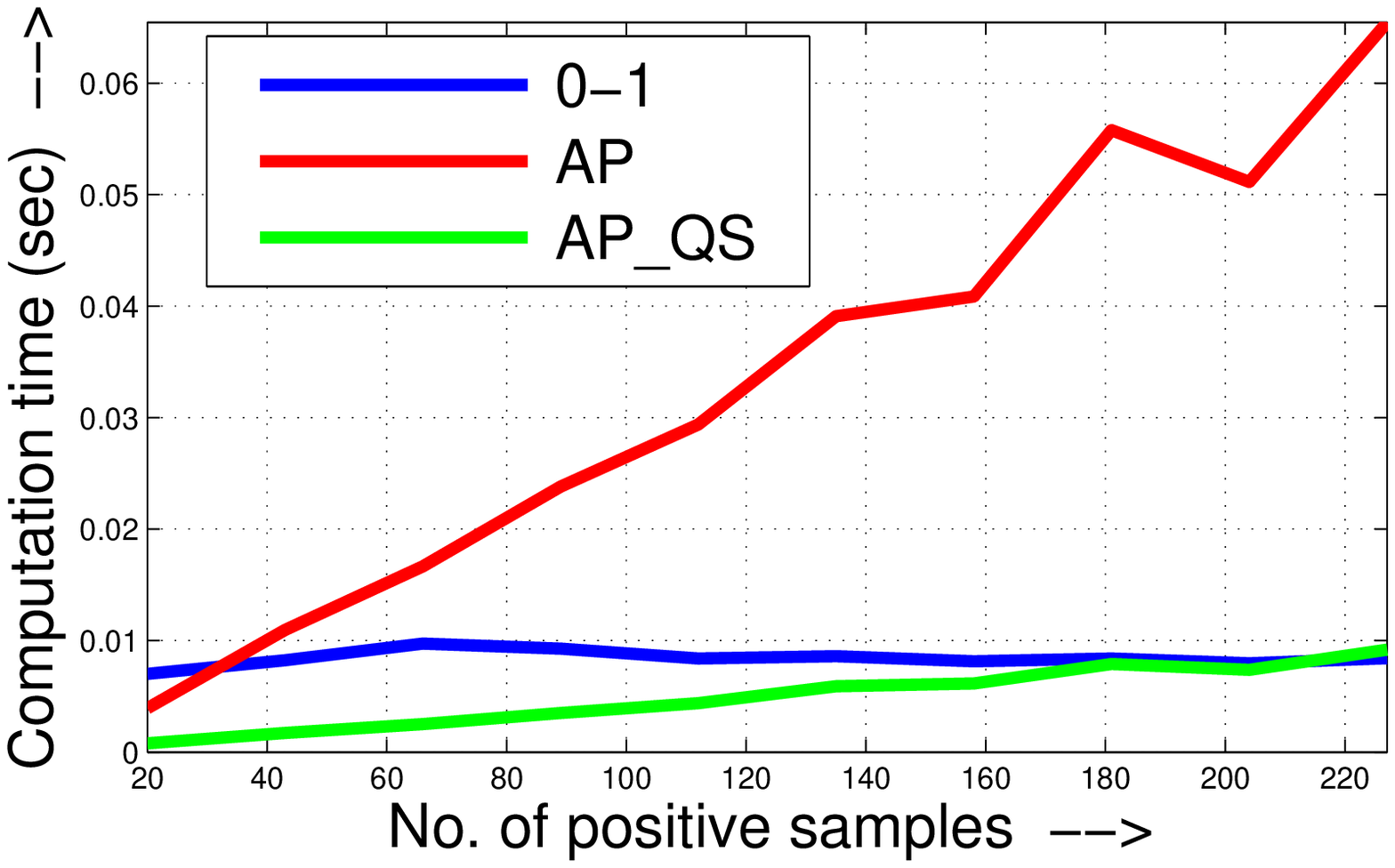}
\end{tabular}
\end{center}
\caption{\emph{Total computation time for multiple calls to loss augmented inference during model training, when the number of total, negative and positive samples are varied.  Here, 0-1, {\sc ap} and {\sc ap\_qs} correspond to loss augmented inference procedures for 0-1 loss, for {\sc ap} loss using \cite{yuesigir07} and for {\sc ap} loss using our method respectively. It can be seen that our method scales really well with respect to sample set sizes and takes computational time that is comparable to what is required for simpler 0-1 decomposable loss.}\vspace*{-3mm}}
\label{fig:apsvm_comptime_inference_all}
\end{figure*}

\begin{figure*}
\begin{center}
\begin{tabular}{@{\hspace{0pt}}c@{\hspace{0pt}}c@{\hspace{0pt}}c}
\includegraphics[trim = 0mm 0mm 0mm 0mm, clip, width=0.33\linewidth]{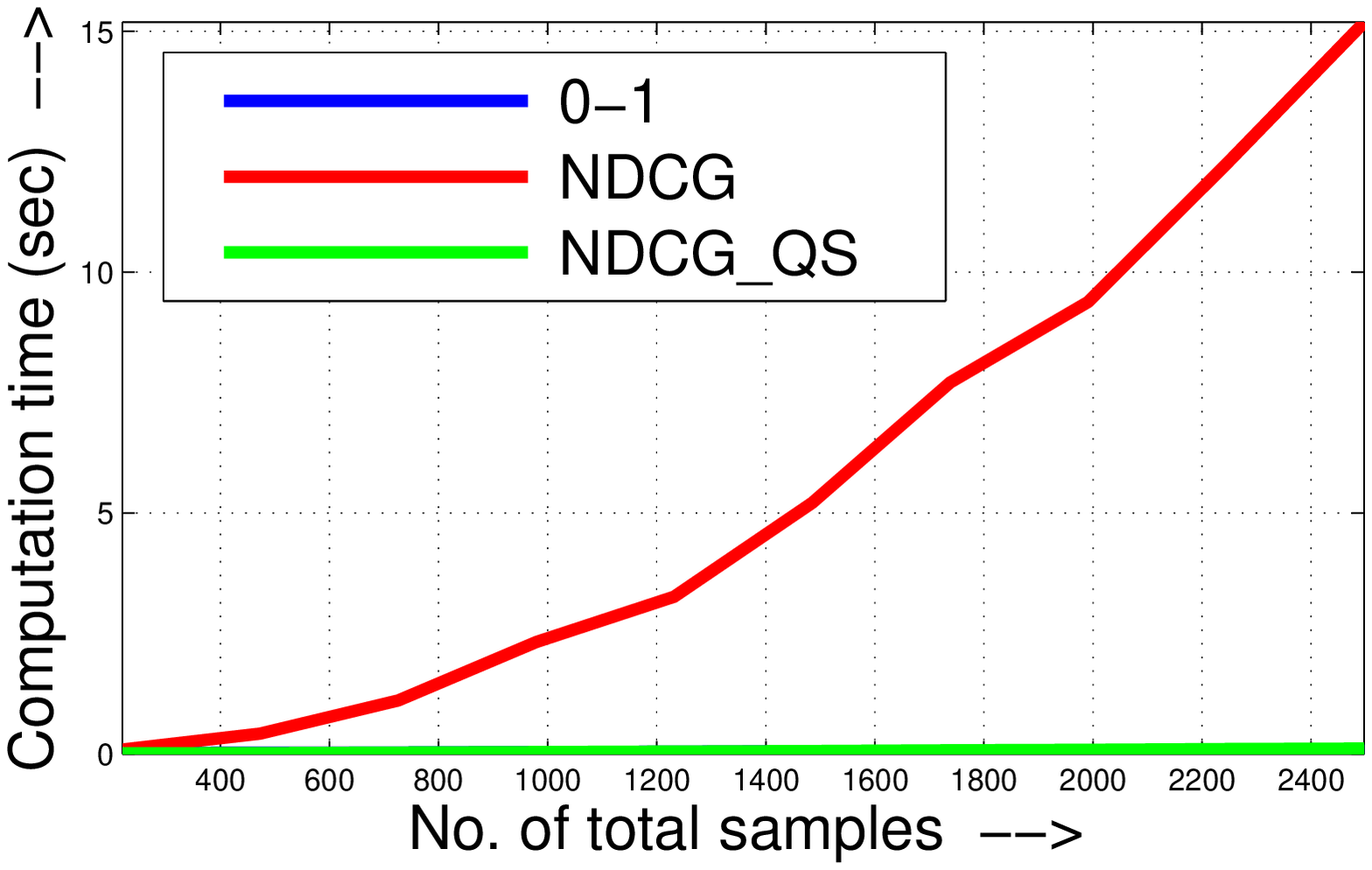}
\includegraphics[trim = 0mm 0mm 0mm 0mm, clip, width=0.33\linewidth]{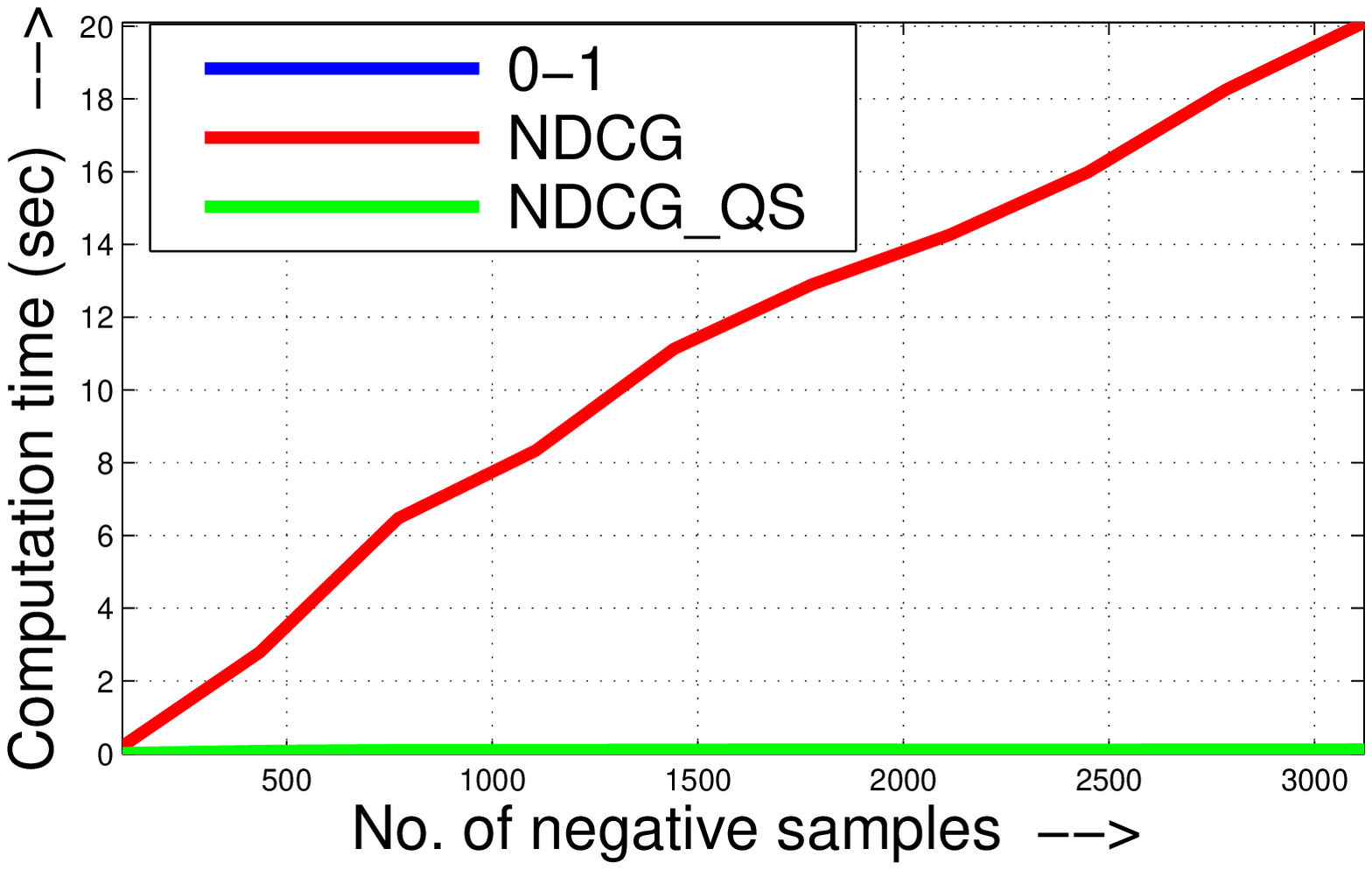} 
\includegraphics[trim = 0mm 0mm 0mm 0mm, clip, width=0.33\linewidth]{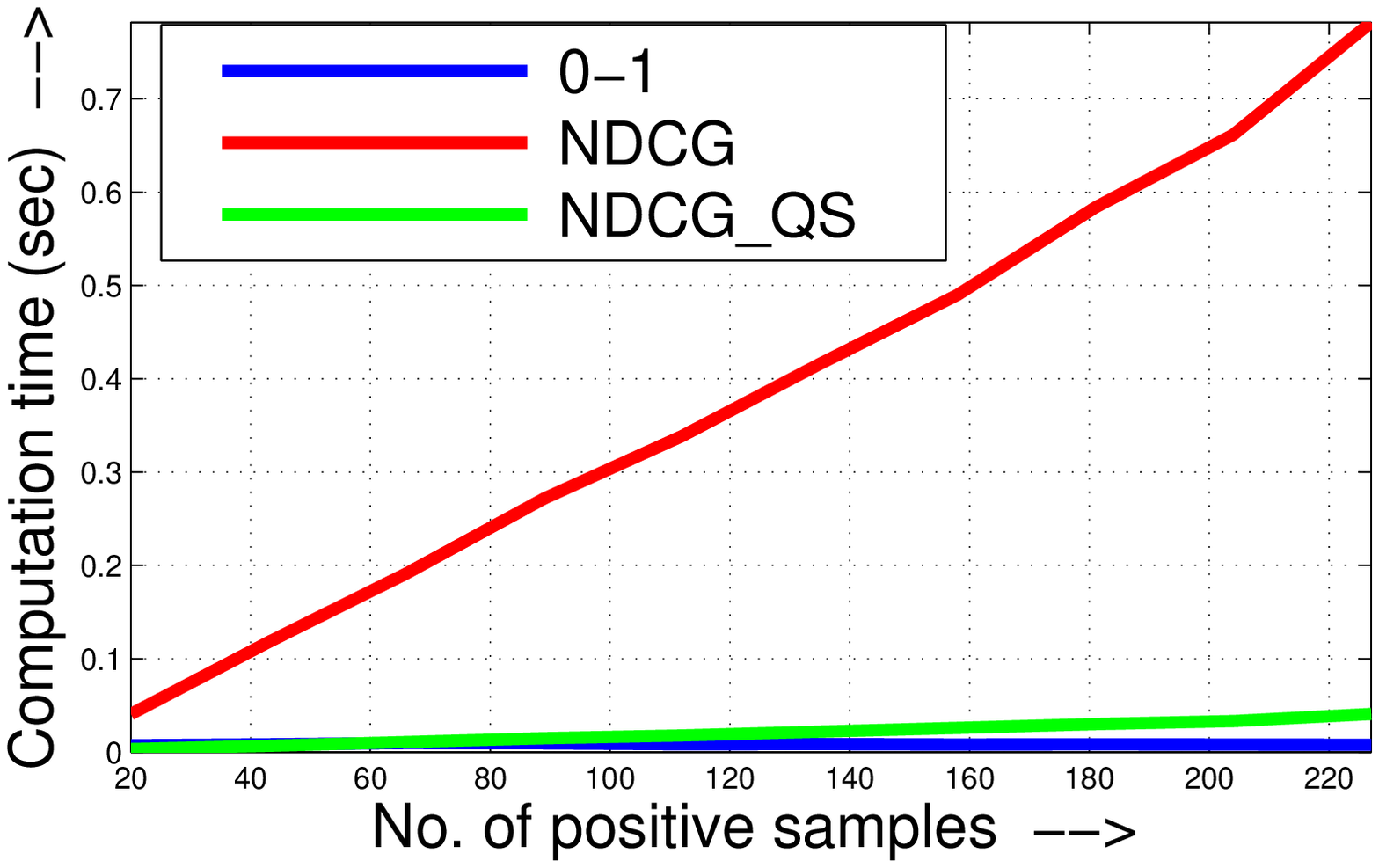}
\end{tabular}
\end{center}
\caption{\emph{Total computation time for multiple calls to loss augmented inference during model training, when the number of total, negative and positive samples are varied. Here, 0-1, {\sc ndcg} and {\sc ndcg\_qs} correspond to loss augmented inference procedures for 0-1 loss, for {\sc ndcg} loss using \cite{chakrabartikdd08} and for {\sc ndcg} loss using our method respectively. As can be seen, our approach scales elegantly with respect to sample set sizes and is comparable to the simpler 0-1 decomposable loss in terms of computation time.}\vspace*{-3mm}}
\label{fig:ndcgsvm_comptime_inference_all}
\end{figure*} 

\myparagraph{Results.}
When we minimize {\sc ap} loss on the training set to learn the model parameters, we get a mean {\sc ap} of 51.196 on the test set. In comparison, minimizing 0-1 loss to learn model parameters leads to a mean {\sc ap} value of 47.934 on the test set. Similarly, minimizing {\sc ndcg} loss for parameter learning gives a superior mean {\sc ndcg} value of 85.521 on the test set, compared to that of 84.3823 when using 0-1 loss. The {\sc ap} and {\sc ndcg} values obtained on the test set for individual action classes can be found in the supplementary material. This clearly demonstrates the usefulness of directly using rank based loss functions like {\sc ap} loss and {\sc ndcg} loss for learning model parameters, instead of using simple decomposable loss functions like 0-1 loss as surrogates.

The time required for the loss augmented inference evaluations, while optimizing the different loss functions for learning model parameters, are shown in Table~\ref{table:ap_timing}. It can be seen that using our method ({\sc ap\_qs}, {\sc ndcg\_qs}) leads to reduction in computational time by a factor of more than $10$, when compared to the methods proposed in \cite{yuesigir07}  and \cite{chakrabartikdd08} for {\sc ap} loss and {\sc ndcg} loss respectively. For {\sc ap} loss, the method proposed in \cite{mohapatranips14} takes computational time of 0.0985 sec for loss augmented inference. Note that, this method is specific to {\sc ap} loss, but our more general method is still around 3 times faster. It can also be observed that although the computational time for each call to loss-augmented inference for 0-1 loss is slightly less than that for {\sc ap} loss and {\sc ndcg} loss (Table ~\ref{table:ap_timing_per_iter}), in some cases we observe that we required more calls to optimize the 0-1 loss. As a result, in those cases training using 0-1 loss is slower than training using {\sc ap} or {\sc ndcg} loss with our proposed method. 

In order to understand the effect of the size and composition of the data set on our approaches, we perform 3 experiments with variable number of samples for the action class ’phoning’. First, we vary the total number of samples while fixing the positive to negative ratio to $1:10$. Second, we vary the number of negative samples while fixing the number of positive samples to 227. Third, we vary the number of positive samples while fixing the number of negative samples to 200. As can be seen in Fig. \ref{fig:apsvm_comptime_inference_all} and Fig. \ref{fig:ndcgsvm_comptime_inference_all}, the time required for loss-augmented inference is significantly lower using our approach for both {\sc ap} and {\sc ndcg} loss.

\subsection{Object Detection}

\myparagraph{Data set.}
We use the {\sc pascal} {\sc voc} 2007 \cite{everinghamijcv10} object detection data set, which consists of 
a total of 9963 images. The data set is divided into a `trainval' set of 5011 images and a `test' set of 4952 images.
All the images are labeled to indicate the presence or absence of the instances of 20 different object categories. In addition, we are
also provided with tight bounding boxes around the object instances, which we ignore during training and testing. Instead, we treat
the location of the objects as a latent variable. In order to reduce the latent variable space, we use the selective-search
algorithm~\cite{uijlingsijcv13} in its fast mode, which generates an average of 2000 candidate windows per image. This results in a training set size of approximately 10 million bounding boxes.

\myparagraph{Model.}
For each candidate window, we use a feature representation that is extracted from a trained Convolutional Neural Network ({\sc cnn}). 
Specifically, we pass the image as input to the {\sc cnn} and use the activation vector of the penultimate layer of the {\sc cnn} as the feature vector.
Inspired by the {\sc r-cnn} pipeline of Girshick {\em et al.\ }~\cite{girshickcvpr14}, we use the {\sc cnn} that is trained on the ImageNet data set~\cite{dengcvpr09},
by rescaling each candidate window to a fixed size of $224\times224$. The length of the resulting feature vector is $4096$. However, in contrast to \cite{girshickcvpr14}, we do not assume ground-truth bounding boxes to be available for training images. We instead optimize {\sc ap} loss in a weakly supervised framework to learn the parameters of the {\sc svm} based object detectors for the 20 object categories.

\myparagraph{Methods.}
We use our approach to learn the parameters of latent {\sc ap-svm}s \cite{behlcvpr14} for each object category. In our experiments, we fix the hyperparameters using 5-fold cross-validation. During testing, we evaluate each candidate window generated by selective search and use non-maxima suppression to prune highly overlapping detections.

\myparagraph{Results.}
For this task of weakly supervised object detection, using {\sc ap} loss for learning model parameters leads to a mean test {\sc ap} of 36.616 which is significantly better than the 29.4995 obtained using 0-1 loss. The {\sc ap} values obtained on the test set by the detectors for each object class can be found in the supplementary material. These results establish the usefulness of optimizing {\sc ap} loss for learning the object detectors. On the other hand, optimizing {\sc ap} loss for this task places high computational demands due to the size of the data set (5011 `trainval' images) as well as the latent space (2000 candidate windows per image) amounting to around 10 million bounding boxes. We show that using our method for loss-augmented inference ({\sc lai}) leads to significant saving in computational time. During training, the total time taken for {\sc lai}, averaged over all the 20 classes, was 0.5214 sec for our method which is an order of magnitude better than the 7.623 sec taken by the algorithm proposed in \cite{yuesigir07}. Thus, using our efficient quicksort flavored algorithm can be critical when optimizing non-decomposable loss functions like {\sc ap} loss for large scale data sets.

\subsection{Image Classification}

\myparagraph{Data set.}
We use the {\sc cifar}-10 data set \cite{krizhevsky2009learning}, which consists of 
a total of 60,000 images of size $32\times32$ pixels. Each image belongs to one of 10 specified classes. The data set is divided into a `trainval' set of 50,000 images and a `test' set of 10,000 images. From the 50,000 `trainval' images, we use 45,000 for training and 5,000 for validation. For our experiments, all the images are centered and normalized.

\myparagraph{Model.}
We use a deep neural network as our classification model. Specifically, we use a piecewise linear convolutional neural network ({\sc pl-cnn}) as proposed in \cite{berrada2016trusting}. We follow the same framework as \cite{berrada2016trusting} for experiments on the {\sc cifar}-10 data set and use a {\sc pl-cnn} architecture comprising 6 convolutional layers and an {\sc svm} last layer. For all our experiments, we use a network that is pre-trained using softmax and cross-entropy loss.

\myparagraph{Methods.}
We learn the weights of the {\sc pl-cnn} by optimizing {\sc ap} loss and {\sc ndcg} loss for the training data set. For comparison, we also report results for parameter learning using the simple decomposable 0-1 loss. We use the layerwise optimization algorithm called {\sc lw-svm}, proposed in \cite{berrada2016trusting}, for optimizing the different loss functions with respect to the network weights. Following the training regime used in \cite{berrada2016trusting}, we warm start the optimization with a few epochs of Adadelta \cite{zeilerAdadelta} before running the layer wise optimization. The {\sc lw-svm} algorithm involves solving a structured {\sc svm} problem for one layer at a time. This requires tens of thousands of calls to loss augmented inference and having a efficient procedure is therefore critical for scalability. We compare our method for loss-augmented inference with the methods described in \cite{yuesigir07} and \cite{chakrabartikdd08}, for {\sc ap} loss and {\sc ndcg} loss respectively. 

\myparagraph{Results.}
We get a better mean {\sc ap} of $85.28$ on the test set when we directly optimize {\sc ap} loss for learning network weights compared to that of $84.22$ for 0-1 loss. Similarly, directly optimizing {\sc ndcg } loss leads to a better mean {\sc ndcg} of $96.14$ on the test set compared to $95.31$ for 0-1 loss. This establishes the usefulness of optimizing non-decomposable loss functions like the {\sc ap} loss and {\sc ndcg} loss. The {\sc lw-svm} algorithm involves very high number of calls to the loss augmented inference procedure. In light of this, the efficient method for loss augmented inference proposed in this paper leads to significant reduction in total training time. When optimizing the {\sc ap} loss, using our method leads to a total training time of $1.589$ hrs compared to that of $1.974$ hrs for the algorithm proposed in \cite{yuesigir07}. Similarly, when optimizing {\sc ndcg} loss, our method leads to a total training time of $1.632$ hrs, which is significantly better than the $2.217$ hrs taken for training when using the method proposed in \cite{chakrabartikdd08}. This indicates that using our method helps the layerwise training procedure scale much better.

\section{Discussion}

We provided a characterization of ranking based loss functions that are amenable to a quicksort based optimization algorithm for the loss augmented inference problem. We proved that the our algorithm provides a better computational complexity than the state of the art methods for
{\sc ap} and {\sc ndcg} loss functions and also established that the complexity of our algorithm cannot be improved upon asymptotically by any comparison based
method. We empirically demonstrated the efficacy of our approach on challenging real world vision problems. In future, we would like to explore extending our approach to other ranking based non-decomposable loss functions like those based on the F-measure or the mean reciprocal rank.

\section*{Acknowledgement}
\vspace*{-2mm}
This work is partially funded by the EPSRC grants EP/P020658/1 and TU/B/000048 and a CEFIPRA grant. MR and VK were supported by the European Research Council under the European Unions Seventh Framework Programme (FP7/2007-2013)/ERC grant agreement no 616160. 

\setcounter{section}{0}
\def\LeftInd{\ensuremath{\ell^-}}
\def\RightInd{\ensuremath{r^-}}
\def\Left{{\ensuremath{\ell^+}}}
\def\Right{{\ensuremath{r^+}}}

\def\Med{{\ensuremath{m}}}

\section*{Appendix}

The supplementary material is organized as follows. In Section \ref{sec:qs-suitable_complete}, we give a full definition of {\sc qs}-suitable loss functions and in Section \ref{sec:algo_detail} we justify the correctness of Algorithm $1$. Section \ref{sec:lossprops} contains proofs of {\sc qs}-suitability of {\sc ap} and {\sc ndcg} losses and Section \ref{sec:complexity} establishes worst-case complexity of Algorithm$~1$ as well as a matching lower-bound on the complexity of solving Problem ($6$). Several remaining proofs are delegated to Section \ref{sec:proof} and we use Section \ref{sec:NDCG} for certain clarifications regarding previous use of {\sc ndcg} in the literature. Finally, we report some additional experimental results in Section \ref{sec:addexpt}. 

\section{Complete characterization of QS-Suitable Loss Functions}
\label{sec:qs-suitable_complete}

A proper loss function $\Delta = \Delta(\bR^*, \bR)$ is called \emph{{\sc qs}-suitable} if it meets the following three conditions.
\begin{enumerate}
\item[(C1)] {\bf Negative decomposability with interleaving dependence.} There are functions $\delta_j \colon \{1, \dots, \sP+1\} \to \mathbb{R}$ for $j = 1, \dots, \sN$ such that for a proper ranking $\bR$ one can write 
$$\Delta(\bR^*, \bR) = \sum_{\bx \in \calN} \delta_{ind^-(\bx)}(rank(\bx)).$$
\item[(C2)] {\bf $j$-monotonicity of discrete derivative.} For every $1 \leq j < \sN$ and $1 \leq i \leq \sP$ we have
$$\delta_{j+1}(i+1) - \delta_{j+1}(i) \geq \delta_{j}(i+1)- \delta_j(i).$$
\item[(C3)] {\bf Fast evaluation of discrete derivative.} For any $j \in \{1, \dots, \sN\}$ and $i \in \{1, \dots, \sP \}$, can the value
$\delta_j(i+1) - \delta_j(i)$
be computed in constant time. 
\end{enumerate}

From (C1), we can see that the loss function depends only on the interleaving ranks of the negative samples. More accurately, it depends on the vector $\mathbf{r} = (r_1, \dots, r_{|\calN|})$ where $r_i$ is the interleaving rank of the $i$-th most relevant negative sample (i.e. with the $i$-th highest score).

Another way to interpret this type of dependence is by looking at the $\pm$-pattern of a ranking which can be obtained as follows.
Given a proper ranking $\bR$ (in the form of a permutation of samples), it is the pattern obtained by replacing each positive sample with a ``$+$'' symbol and each negative sample with a ``$-$'' symbol. It is easy to see that the $\pm$-pattern uniquely determines the vector $\mathbf{r}$ and vice versa and thus (C1) also implies dependence on the $\pm$-pattern.

\section{Justification of Algorithm $1$} \label{sec:algo_detail}

First key point is that the entire objective function ($6$) inherits properties (C1) and (C2).

\begin{obs}\label{obs:obj_nice} The following holds:
\begin{enumerate} 
\item[(a)]
\label{obs:fj}
There are functions $f_j \colon \{1, \dots, \sP+1\} \to \mathbb{R}$ for $j = 1, \dots, \sN$ such that the objective function in ($6$) can be written as
$$\sum_{j=1}^{\sN} f_{j}(r_j),$$ 
where $r_j$ is the interleaving rank of the negative sample $x$ with $ind^-(x) = j$.
\item[(b)] The functions $f_j$ inherit property (C2). More precisely, for every $1 \leq j < \sN$ and $1 \leq i \leq \sP$ we have
$$f_{j+1}(i+1) - f_{j+1}(i) \geq f_{j}(i+1)- f_j(i).$$
\item[(c)] We can compute $\argmax_{l \leq i \leq r} f_j(i)$ in $O(r-l)$ time if we are provided access to the sorted array $\{s_i^+\}$ and to the score of the negative sample $x$ with $ind^-(x) = j$.
\end{enumerate}
\end{obs}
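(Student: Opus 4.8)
The plan is to split the objective in ($6$) into the loss term $\Delta(\bR^*,\bR)$, which is already controlled by (C1)--(C3), and the score-dependent term coming from the joint feature map, and then to show that the score term inherits the very same per-negative structure. Throughout I work with proper rankings, in which the positive samples appear in decreasing order of their scores $s^+_1 \geq \dots \geq s^+_{\sP}$ and the negatives in decreasing order of $s^-_1 \geq \dots \geq s^-_{\sN}$; this is exactly the property that lets a ranking be recovered from the interleaving vector $\mathbf{r}$, and it is what drives every step below.

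For part (a) I would first note that the score part of ($6$) is a sum over positive--negative pairs whose summand is a signed multiple of $s^+ - s^-$, with the sign fixed by the relative order of the pair and a positive normalizing constant $c$. Fixing a negative $\bx$ with $ind^-(\bx)=j$ and interleaving rank $r_j = i$, properness forces the positives ranked above $\bx$ to be exactly the $i-1$ highest-scoring ones. Hence the total contribution of all pairs containing $\bx$ collapses to a function $g_j(i)$ that depends only on $i$ and on $s^-_j$ — concretely, a signed partial sum $c\big(\sum_{k<i} s^+_k - \sum_{k\geq i} s^+_k\big)$ of positive scores minus a multiple of $s^-_j$. Summing $\delta_j(i)$ from (C1) together with $g_j(i)$ over all negatives then yields the claimed form with $f_j = \delta_j + g_j$.

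For part (b) I would compute the discrete derivative of $g_j$ explicitly. Telescoping the partial sums gives $g_j(i+1) - g_j(i) = 2c\,(s^+_i - s^-_j)$, so that $\big[g_{j+1}(i+1)-g_{j+1}(i)\big] - \big[g_j(i+1)-g_j(i)\big] = 2c\,(s^-_j - s^-_{j+1}) \geq 0$, where the inequality is precisely the sorted order of the negatives together with $c>0$. Adding this to (C2), which supplies the same inequality for the $\delta_j$ part, yields $j$-monotonicity for $f_j = \delta_j + g_j$. For part (c) I would use that $f_j(i+1) - f_j(i) = \big[\delta_j(i+1)-\delta_j(i)\big] + 2c\,(s^+_i - s^-_j)$ is computable in constant time: the first summand by (C3) and the second from the sorted array $\{s^+_i\}$ and the stored value $s^-_j$. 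Sweeping $i$ from $l$ to $r$ and accumulating these derivatives produces the values $f_j(i) - f_j(l)$ while tracking the running maximiser, giving $\argmax_{l \leq i \leq r} f_j(i)$ in $O(r-l)$ time.

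The main obstacle is part (a): one has to argue carefully that properness pins down \emph{which} positives lie above a given negative, so that the pairwise score term genuinely decouples into a sum of per-negative functions of the interleaving rank alone, and to get the constant $c$ and all signs right. Once that decomposition and the explicit formula $g_j(i+1)-g_j(i) = 2c(s^+_i - s^-_j)$ are in hand, parts (b) and (c) follow mechanically.
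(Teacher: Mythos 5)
Your proposal is correct and takes essentially the same route as the paper's own proof: both arrive at $f_j = \delta_j + g_j$ with the explicit discrete derivative $f_j(i+1) - f_j(i) = \frac{2(s^+_i - s^*_j)}{\sP\sN} + \delta_j(i+1) - \delta_j(i)$, deduce (b) from the sorted order of the negative scores combined with (C2) for the $\delta_j$, and obtain (c) by sweeping $i$ from $l$ to $r$ while accumulating the constant-time derivatives. The only cosmetic difference is that the paper first records a per-sample coefficient decomposition of $F$ (its Lemma~\ref{objfuncdecomp}) and then regroups by negatives, whereas you group the pairwise score terms per negative directly via properness --- the resulting formula for $g_j$ is identical.
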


As a result, solving Problem ($6$) reduces to computing the optimal interleaving ranks (or the optimal vector $\mathbf{r}$ from the remark above) \footnote{Note that the value of the objective can be computed efficiently given a vector $\mathbf{r}$ -- for example by constructing any ranking $\bR$ which respects $\mathbf{r}$.}. 

The next vital point is that these interleaving ranks can be optimized independently. This is however not obvious. One certainly can maximize each $f_j$  but the resulting vector $\mathbf{r}$ may not induce any ranking -- its entries may not be monotone.

But as a matter of fact, this does not happen and Observation $2$ from the main text gives the precise guarantee. This ``correctness of greedy maximization'' hinges upon condition (C2) as will also be demonstrated with a  counterexample given later in Section \ref{sec:NDCG}.

All in all, it suffices to compute the vector $\mathbf{opt}$ in which $opt_j = \max \argmax f_j$ (the maximum ensures that ties are broken consistently) as is done in the main text of the paper.

\section{Properties of $\Delta_{AP}$ and $\Delta_{NDCG}$} \label{sec:lossprops}

In this place, let us prove the aforementioned properties of $\Delta_{AP}$ and $\Delta_{NDCG}$.

\begin{proposition} $\Delta_{NDCG}$ is QS-suitable.
\end{proposition}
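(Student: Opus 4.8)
The plan is to verify the three defining conditions (C1)--(C3) directly, with the whole argument resting on two facts: an exact formula for the absolute rank of a negative sample inside a proper ranking, and the convexity of the NDCG discount function $d$ (the standard choice being $d(t) = 1/\log_2(t+1)$).

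First, for (C1), I would rewrite the discounted cumulative gain so that it is indexed by the negative samples rather than the positive ones. In a proper ranking the negatives appear in decreasing score order, so the $j$-th most relevant negative has exactly $j-1$ negatives above it; if its interleaving rank is $i$ (meaning $i-1$ positives precede it), its absolute rank is $i + j - 1$. Since in the binary-relevance setting every positive contributes $d(\cdot)$ and every negative contributes $0$, the gain collected at the positive positions equals the total discount mass minus the mass sitting at the negative positions,
$$\mathrm{DCG}(\bR) = \sum_{t=1}^{\sP+\sN} d(t) - \sum_{j=1}^{\sN} d(r_j + j - 1).$$
Dividing by the constant ideal gain $\mathrm{DCG}^* = \sum_{k=1}^{\sP} d(k)$ and using $\Delta_{NDCG} = 1 - \mathrm{DCG}/\mathrm{DCG}^*$ isolates the dependence on the interleaving ranks. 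Setting $\delta_j(i) = d(i+j-1)/\mathrm{DCG}^*$ and absorbing the leftover additive constant into the $\delta_j$'s then gives exactly the negative-decomposable form required by (C1).

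With this closed form in hand, (C2) and (C3) are short. The discrete derivative is $\delta_j(i+1) - \delta_j(i) = [\,d(i+j) - d(i+j-1)\,]/\mathrm{DCG}^*$, so the $j$-monotonicity inequality of (C2) reduces, after writing $m = i+j$, to $d(m+1) - d(m) \ge d(m) - d(m-1)$ --- i.e. to convexity of $d$. I would confirm this for the logarithmic discount by checking that $t \mapsto 1/\log_2(t+1)$ has a nonnegative discrete second difference (equivalently, a positive second derivative on $[1,\infty)$). Condition (C3) follows from the same expression: once $\mathrm{DCG}^*$ is precomputed, each discrete derivative is a difference of two evaluations of $d$, hence $O(1)$.

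The main obstacle is (C1): the nontrivial step is recognizing that the naturally positive-indexed DCG can be re-expressed purely through the negatives, via the identity $rank = i + j - 1$ together with the complementary ``total mass minus negative mass'' accounting, which is precisely what forces the loss into the interleaving-dependence template. Everything downstream is mechanical, modulo confirming convexity of the particular discount --- the single place where the concrete form of NDCG (rather than an arbitrary negative-decomposable loss) is actually used.
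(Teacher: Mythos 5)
Your proposal is correct and follows essentially the same route as the paper: your ``total discount mass minus mass at negative positions'' accounting yields exactly the paper's decomposition $\delta_j(i) = \frac{1}{C}\left(D(i+j-1) - D(\sP+j)\right)$ once the leftover constant $-\frac{1}{C}\sum_{j} D(\sP+j)$ is distributed per $j$, and your reduction of (C2) to convexity of the discount and of (C3) to a constant-time difference matches the paper verbatim. The only cosmetic difference is that the paper verifies the decomposition by a direct chain of equalities starting from $1 - \sum_{\bx\in\calP} D(ind(\bx))/C$, whereas you make the underlying complementarity argument explicit.
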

\begin{proof} As for (C1), let us first verify that the functions $\delta_j$ can be set as
$$\delta_j(i) = \frac{1}{C} \left(D(i+j-1) - D(\sP + j)\right),$$
where $C = \sum_{i=1}^{\sP} D(i)$.
Indeed, one can check that
\begin{align*}
&\Delta(\bR^*, \bR) \\ &= 1-\frac{\sum_{\bx \in {\cal P}} D(ind(\bx))}{\sum_{i = 1}^{\sP} D(i)} \\
 &= \frac{1}{C} \sum_{i = 1}^{\sP} D(i) - \sum_{\bx \in {\cal P}} D(ind^+(\bx) + rank(\bx)
-1 )  \\
&= \frac{1}{C\!} \sum_{\bx \in {\cal N}}\!\! D(ind^-\!(\bx)\! + \!rank(\bx)\! -\! 1 )\! - \! D(\sP \!+\! ind^-\!(\bx)) \\
 &= \sum_{\bx \in \calN} \delta_{ind^-(\bx)}(rank(\bx))
\end{align*}
as desired.
As for (C2) and (C3), let us realize that
$$\delta_j(i+1) - \delta_j(i) = \frac{1}{C} \left( D(i+j) - D(i+j-1) \right).$$
Then (C3) becomes trivial and checking (C2) reduces to
$$D(i+j+1) + D(i+j-1) \geq 2 D(i+j)$$
which follows from convexity of the function $D$.
\end{proof}

\begin{proposition} $\Delta_{AP}$ is QS-suitable.
\end{proposition}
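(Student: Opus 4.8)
The plan is to mirror the $\Delta_{NDCG}$ argument: propose explicit functions $\delta_j$, check that they reproduce the loss (C1), and then read off (C2) and (C3) from a closed form of the discrete derivative. Writing $\Delta_{AP} = 1 - AP$ and noting that the $p$-th positive sample sits at rank $p + n_p$, where $n_p$ is the number of negatives ranked above it, one gets $\Delta_{AP}(\bR^*,\bR) = \frac{1}{\sP}\sum_{p=1}^{\sP} \frac{n_p}{p+n_p}$. I would propose
$$\delta_j(i) = \frac{1}{\sP}\sum_{p=i}^{\sP}\frac{p}{(p+j-1)(p+j)}.$$

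The decisive fact for (C1) is that in a proper ranking the interleaving ranks are non-decreasing, $r_1 \le r_2 \le \dots \le r_{\sN}$, since a positive that outranks some negative also outranks every lower-scoring negative; hence the negatives above any fixed positive $p$ form a prefix $\{1,\dots,n_p\}$. I would insert the negatives into the ranking one at a time in score order and track the loss. Inserting negative $j$ raises the count of negatives above positive $p$ from $j-1$ to $j$ exactly for the positions $p \ge r_j$ and for no others, so, using the identity $\frac{j}{p+j}-\frac{j-1}{p+j-1}=\frac{p}{(p+j-1)(p+j)}$, its marginal contribution to the loss is precisely $\delta_j(r_j)$. Telescoping these increments from the perfect (zero-loss) ranking then yields $\Delta_{AP}=\sum_{\bx\in\calN}\delta_{ind^-(\bx)}(rank(\bx))$, which is (C1).

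Because $\delta_j$ is a partial sum, its discrete derivative collapses to a single term, $\delta_j(i+1)-\delta_j(i) = -\frac{1}{\sP}\cdot\frac{i}{(i+j-1)(i+j)}$, so (C3) is immediate. For (C2) I would substitute this expression into $\delta_{j+1}(i+1)-\delta_{j+1}(i) \ge \delta_j(i+1)-\delta_j(i)$ and cancel the common nonnegative factor $i/\sP$; the inequality then reduces to $(i+j-1)(i+j) \le (i+j)(i+j+1)$, i.e. to $2(i+j) \ge 0$, which is trivial.

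The only genuinely delicate step is (C1): at first sight $\Delta_{AP}$ looks non-separable because the denominator $p+n_p$ of every precision term couples all the interleaving ranks, and the separation is recovered only because monotonicity of $\mathbf{r}$ forces the negatives above each positive to be an initial segment, which is exactly what makes the insertion increments telescope and validates the closed form of $\delta_j$. Once this prefix structure is exploited, (C2) and (C3) are routine.
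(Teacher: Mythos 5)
Your proof is correct and takes essentially the same route as the paper: by the identity $\frac{j}{p+j}-\frac{j-1}{p+j-1}=\frac{p}{(p+j-1)(p+j)}$, your $\delta_j(i)=\frac{1}{\sP}\sum_{p=i}^{\sP}\frac{p}{(p+j-1)(p+j)}$ is exactly the paper's decomposition (which the paper takes from \cite{yuesigir07}), your discrete derivative matches theirs, (C3) is read off identically, and your algebraic cancellation for (C2) is the same computation the paper packages as concavity of $g_i(x)=\frac{x}{x+i}$. The only difference is cosmetic and in your favor: you verify (C1) from scratch via the telescoping insertion argument (using monotonicity of the interleaving ranks in a proper ranking), where the paper simply cites prior work.
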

\begin{proof} Regarding (C1), the functions $\delta_j$ were already identified in \cite{yuesigir07} as
$$\delta_j(i) = \frac{1}{\sP} \sum_{k=i}^{\sP}\left( \frac{j}{j+k} - \frac{j-1}{j+k-1} \right)$$
so after writing
$$\delta_j(i+1) - \delta_j(i) = \frac{j-1}{j+i-1} - \frac{j}{j+i}$$
we again have (C3) for free and (C2) reduces to
$$2g_i(j) \geq g_i(j-1) + g_i(j+1),$$
where $g_i(x) = \frac{x}{x+i}$, and the conclusion follows from concavity of $g_i(x)$ for $x > 0$.
\end{proof}

\section{Computational Complexity}
\label{sec:complexity}

Now is the time to establish the computational complexity of Algorithm $1$ as well as the afore-mentioned matching lower bound.
efficiency.

\begin{theorem}\label{alg} If $\Delta$ is QS-suitable, then the Problem ($6$) can be solved in time $O(\sN \log\sP + \sP \log\sP + \sP \log\sN)$, which in the most common case $\sN > \sP$ reduces to $O(\sN \log\sP)$.
\end{theorem}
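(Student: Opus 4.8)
The plan is to show that Algorithm 1 never materializes the full $\sN\times(\sP+1)$ table of values $f_j(i)$, but instead confines each negative to an ever-shrinking window of admissible interleaving ranks. By Observation 1(a) the task reduces to computing $opt_j=\max\argmax_i f_j(i)$ for every negative, and by the monotonicity guarantee (Observation 2, itself a consequence of (C2)) the map $j\mapsto opt_j$ is nondecreasing in $j=ind^-$. Two primitives are available: after an $O(\sP\log\sP)$ sort of the positive scores, Observation 1(c) returns $\argmax_{l\le i\le r} f_j(i)$ in $O(r-l)$ time, and (C3) returns a single discrete derivative $f_j(i+1)-f_j(i)$ in $O(1)$ time. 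That sort accounts for the $\sP\log\sP$ term, and everything below assumes these primitives.

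First I would set up the divide-and-conquer on the rank axis. Call a subproblem $(J,[l,r])$, where $J$ is a block of negatives consecutive in score order whose optima are already known to lie in the window $[l,r]$. Choosing the midpoint $m=\lfloor(l+r)/2\rfloor$, monotonicity of $opt_j$ guarantees that $\{j\in J:opt_j\le m\}$ is a prefix of $J$ and its complement a suffix; hence there is a single split index $j^\star$, and the subproblem breaks into $(J_{\le j^\star},[l,m])$ and $(J_{>j^\star},[m+1,r])$ with disjoint windows of (almost) halved size. Because the windows are disjoint and halve, the recursion has depth $O(\log\sP)$, at every level the windows cover at most $\sP+1$ ranks in total, and the number of internal nodes is $O(\sP)$. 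Routing a negative across a split is $O(1)$ work (an in-place partition of an array segment), so each of the $\sN$ negatives is touched $O(\log\sP)$ times, which is the source of the dominant $\sN\log\sP$ term.

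Next I would bound the work spent localizing the split index $j^\star$ at each pivot. Since $J$ is monotone, $j^\star$ is found by binary search over $J$, i.e.\ in $O(\log|J|)=O(\log\sN)$ comparisons, and summed over the $O(\sP)$ internal nodes this contributes the remaining $O(\sP\log\sN)$ term (any one-off window scan used to prime the comparisons telescopes to $O(\sP\log\sP)$ and is absorbed). Collecting the three contributions yields $O(\sN\log\sP+\sP\log\sP+\sP\log\sN)$, and the stated simplification to $O(\sN\log\sP)$ when $\sN>\sP$ follows because $x\mapsto(\log x)/x$ is eventually decreasing, so $\sP\log\sN\le\sN\log\sP$, while trivially $\sP\log\sP\le\sN\log\sP$.

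The hard part is making each comparison in the binary search run in $O(1)$: naively deciding whether $opt_j\le m$ means comparing $\max_{i\le m}f_j(i)$ against $\max_{i>m}f_j(i)$, which costs an entire window scan and would inject an extra logarithmic factor, destroying the bound. The obstacle is to discharge this comparison using only (C3) and the monotone structure — for instance by carrying one-sided running maxima down the recursion and reading off from the boundary derivative on which side of $m$ the peak of $f_j$ lies, so that the test is constant-time. Establishing this, together with verifying that the disjoint-window (laminar) accounting really caps the per-level window total at $\sP+1$ and the per-level routing at $\sN$, is where the care lies; the correctness of the prefix/suffix split itself is already granted by Observation 2 and ultimately rests on (C2).
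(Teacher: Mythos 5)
There is a genuine gap, and it sits exactly where you flagged it: the $O(1)$ test for ``is $opt_j \le m$?'' does not exist under QS-suitability alone, and without it your bound fails. Your proposed repair reads the side of the peak off a boundary derivative, which presupposes that $f_j$ is unimodal in $i$ --- but nothing in (C1)--(C3) grants this. Condition (C2) controls how the discrete derivative varies across $j$ (it is a monotonicity-in-$j$, i.e.\ inverse-Monge, condition underlying Observation 2), not how it varies in $i$. Concretely, $f_j(i+1)-f_j(i) = \frac{2(s^+_i - s^*_j)}{\sN\sP} + \delta_j(i+1)-\delta_j(i)$: the first term is nonincreasing in $i$ (positives are sorted), while the second can be \emph{increasing} in $i$ (for $\Delta_{NDCG}$, convexity of $D$ makes $\delta_j(i+1)-\delta_j(i)$ nondecreasing; for $\Delta_{AP}$ it is non-monotone), so the derivative of $f_j$ can change sign many times and $f_j$ can have many local maxima inside $[l,r]$. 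The ``one-sided running maxima'' idea does not rescue this either: a maximum carried down the recursion belongs to the specific $j$'s probed earlier, whereas each new probe needs $\max_{l\le i\le m} f_j(i)$ versus $\max_{m< i\le r} f_j(i)$ for its \emph{own} $j$; moreover (C3) only gives derivatives in $O(1)$ --- even a single value $f_j(i)$ is not an $O(1)$ primitive (for $\Delta_{AP}$, $\delta_j(i)$ is a sum of $\sP-i+1$ terms). With the only available primitive, Observation 1(c)'s $O(r-l)$ window scan, each binary-search probe costs $O(p)$, giving $O(p\log\sN)$ per node and $O(\sP\log\sP\log\sN)$ overall for the probes --- the extra logarithmic factor you yourself note would destroy the bound, and which your sketch does not remove.

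The paper sidesteps this obstacle by transposing your recursion: instead of pivoting on the median \emph{rank} and searching for the split negative, Algorithm 1 pivots on the \emph{median negative} of the current block (found by linear-time {\sc Median}/{\sc Select}, cost $O(n)$), computes that single negative's optimum $opt_{m}$ by one $O(p)$ window scan via Observation 1(c), and recurses on the two halves of negatives with rank windows split at $opt_{m}$, so that $p_1+p_2=p+1$. No per-negative side test is ever needed; every operation is chargeable to either a negative traversal or a positive traversal. The remaining work is purely the aggregate accounting, done by induction on the recurrences $T_{neg}(n,p)\le Cn + T_{neg}(n/2,p_1)+T_{neg}(n/2,p_2)$ with $T_{neg}(n,1)\le Cn$, yielding $T_{neg}(n,p)\le C'n\log(1+p)$, and symmetrically $T_{pos}(n,p)\le C'(p-1)\log(1+n)$, which together with the $O(\sP\log\sP)$ sort gives the stated $O(\sN\log\sP+\sP\log\sP+\sP\log\sN)$. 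Your high-level scaffolding --- monotonicity of $j\mapsto opt_j$ from Observation 2, disjoint halving windows, and the laminar per-level accounting --- is sound and parallels the paper's, but the choice of pivot axis is not cosmetic: with only (C1)--(C3) as primitives, the median-negative pivot is what makes the recursion implementable within the claimed time.
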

Outside running Algorithm $1$, the entire computation also consists of preprocessing (sorting positive samples by their scores) and post processing (computing the output from vector $\mathbf{opt}$). These subroutines have only one non-linear complexity term -- $O(\sP \log\sP)$ coming from the sorting.
Therefore, it remains to establish the complexity of Algorithm $1$ as $O(\sN \log \sP + \sP \log \sN)$. 

To this end, let us denote $n = \RightInd-\LeftInd+1$ and $p = \Right-\Left+1$, and set $T_{neg}(n,p)$, $T_{pos}(n,p)$ as the total time spent traversing the arrays of negative and positive sample scores, respectively, including recursive calls. The negative score array is traversed in the {\sc Median} and {\sc Select} procedures and the positive scores are traversed when searching for $opt_{\Med}$. The latter has by complexity $O(p)$, due to Observation \ref{obs:obj_nice}(c), whose assumption are always satisfied during the run of the algorithm.

\begin{proposition} The runtimes $T_{neg}(n,p)$ and $T_{pos}(n,p)$ satisfy the following recursive inequalities
\begin{align*}
T_{neg}(n,p) &\leq Cn + T_{neg}(n/2, p_1) + T_{neg}(n/2, p_2) \\
             &                              \hspace{80pt} \text{for some} \quad p_1+p_2=p+1, \\
T_{pos}(n,p) &\leq Cp + T_{pos}(n/2, p_1) + T_{pos}(n/2, p_2) \\
             &                              \hspace{80pt} \text{for some} \quad p_1+p_2=p+1, \\
\noalign{\centering $T_{neg}(n,1) \leq Cn, \qquad T_{neg}(1,p) = 0,$ }
\noalign{\centering $T_{pos}(n,1) = 0, \qquad T_{pos}(1,p)\leq Cp$}
\end{align*}
for a suitable constant $C$. These inequalities imply $T_{neg}(n,p)\! \leq \! C'n\log(1+p)$ and $T_{pos}(n,p)\! \leq\! C'(p-1) \log (1+n)$ for another constant $C'$. Thus the running time of Algorithm $1$, where $p=\sP+1$, $n=\sN$, is $O(\sN \log \sP + \sP \log \sN)$.
\end{proposition}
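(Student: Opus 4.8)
The plan is to establish the two closed-form bounds $T_{neg}(n,p) \leq C' n \log(1+p)$ and $T_{pos}(n,p) \leq C'(p-1)\log(1+n)$ separately, each by strong induction on $n$ (the natural choice since every recursive call halves $n$), and then to obtain the theorem by substituting $p = \sP+1$, $n = \sN$. For both inductions the base cases come straight from the stated boundary conditions: when $n=1$ we have $T_{neg}(1,p)=0$ and $T_{pos}(1,p)\le Cp$, and when $p=1$ we have $T_{neg}(n,1)\le Cn$ and $T_{pos}(n,1)=0$. In each case the claimed bound holds once $C'$ is taken large enough; for instance $T_{pos}(1,p)\le Cp \le C'(p-1)\log 2$ uses $p/(p-1)\le 2$ for $p\ge 2$, while the degenerate $p=1$ is absorbed by the other base case.

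For the inductive step of $T_{neg}$ with $n>1$, $p>1$, I would apply the hypothesis to both children to get
$$T_{neg}(n,p) \le Cn + \tfrac{C'n}{2}\bigl(\log(1+p_1)+\log(1+p_2)\bigr), \qquad p_1+p_2 = p+1.$$
Since the split $(p_1,p_2)$ is not under our control, the bound must survive the worst case, and here is the crux: by concavity of $\log$ the quantity $\log(1+p_1)+\log(1+p_2)$ is maximized by the balanced split, giving $\le 2\log\frac{p+3}{2}$. Substituting reduces the goal to the constant inequality $C \le C'\log\frac{2p+2}{p+3}$, and since $\frac{2p+2}{p+3}$ is increasing in $p$ with value $\tfrac65$ at the smallest relevant $p=2$, any $C' \ge C/\log\tfrac65$ closes the step. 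I expect this balanced-split estimate to be the main obstacle: unlike the $T_{pos}$ case the logarithms do not telescope, so one must invoke concavity to tame the adversarial split.

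The $T_{pos}$ step is cleaner because the factor multiplying the logarithm telescopes exactly: applying the hypothesis to both children and using $p_1+p_2-2 = p-1$ yields
$$T_{pos}(n,p) \le Cp + C'(p-1)\log\bigl(1+\tfrac{n}{2}\bigr),$$
so that the target $C'(p-1)\log(1+n)$ is reached once $Cp \le C'(p-1)\log\frac{2+2n}{2+n}$. Here $\frac{2+2n}{2+n}$ is increasing in $n$ with value $\tfrac32$ at the smallest relevant $n=2$, and $p/(p-1)\le 2$ for $p\ge 2$, so $C' \ge 2C/\log\tfrac32$ suffices; there is no balancing subtlety since the $p$-dependence cancels additively.

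Finally, substituting $p=\sP+1$ and $n=\sN$ gives $T_{neg} = O(\sN\log\sP)$ and $T_{pos} = O(\sP\log\sN)$, whose sum is the claimed $O(\sN\log\sP + \sP\log\sN)$; combined with the $O(\sP\log\sP)$ preprocessing noted before the proposition this yields the theorem. Routine points I would verify but not belabor are the floor/ceiling rounding of $n/2$ (harmless under strong induction since $n/2 < n$ and the bound is monotone in $n$) and the independence of the argument from the logarithm base (absorbed into $C'$).
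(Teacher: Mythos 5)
Your proof is correct, and structurally it is the same induction the paper uses: strong induction with an adversarial split $p_1+p_2=p+1$, absorbing the additive $Cn$ (resp.\ $Cp$) term into the logarithmic gap, then assembling the theorem by substituting $p=\sP+1$, $n=\sN$ alongside the $O(\sP\log\sP)$ preprocessing. The one genuine difference is at the crux of the $T_{neg}$ step, and there your version is the more careful one. The paper closes the step via $\frac{C}{C'}+\log\sqrt{(1+p_1)(1+p_2)}\le\log(p_1+p_2)$, justified by citing $1+(1+p_1)(1+p_2)\le(p_1+p_2)^2$; but that inequality only yields a gap of $-\tfrac12\log\bigl(1-(p_1+p_2)^{-2}\bigr)$, which vanishes as $p$ grows, so taken literally it does not by itself justify a single $C'$ independent of $p$. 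Your AM--GM/concavity estimate $\log(1+p_1)+\log(1+p_2)\le 2\log\frac{p+3}{2}$, followed by monotonicity of $\frac{2p+2}{p+3}$ in $p$, yields the uniform gap $\log\frac{6}{5}$ at $p=2$ and is exactly what the ``sufficiently high $C'$'' claim requires; the paper's conclusion is right (the balanced split is indeed the worst case), but your route is the rigorous sharpening of its step. You also write out the $T_{pos}$ case, which the paper dismisses as ``an easier variation,'' and your telescoping observation $p_1+p_2-2=p-1$ pinpoints precisely why it is easier. The only piece you elide is the first claim of the proposition --- that the recurrences themselves hold --- which the paper dispatches with ``inspection of Algorithm 1''; a one-line remark (the median split halves the negative range, and splitting the positive search range at $opt_m$, which belongs to both subproblems, gives $p_1+p_2=p+1$) would make your writeup fully self-contained.
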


\begin{proof} The recursive inequalities follow from inspection of Algorithm $1$. As for the ``aggregated'' inequalities, we proceed in both cases by induction.
For the first inequality the base step is trivial for high enough constant $C'$ and for the inductive step we may write
\begin{align*}
T_{neg}(n,p) & \leq Cn + T_{neg}(n/2, p_1) + T_{neg}(n/2, p_2) \\
&\leq Cn + \frac12 C'n \log(1+p_1) + \frac12 C'n \log(1+p_2) \\
&= C'n\left( \frac{C}{C'} + \log \sqrt{(1+p_1)(1+p_2)} \right) \\
&\leq C'n \log(p_1+p_2) = C'n\log(1+p)
\end{align*}
where in the last inequality we used that $$1+(1+p_1)(1+p_2) \leq (p_1+p_2)^2$$
for integers $p_1$, $p_2$ with $p_1+p_2 = p+1 \geq 3$. That makes the last inequality true for sufficiently high $C'$ (not depending on $n$ and $p$).

The proof of the second inequality is an easier variation on the previous technique.
\end{proof} 

\subsection{Lower Bound on Complexity}
\label{subsec:lowerBound}

In order to prove the matching lower bound (among comparison-based algorithms), we intend to use the classical information theoretical argument: There are many possible outputs and from each comparison we receive one bit of information, therefore we need ``many'' comparison to shatter all output options.

\begin{proposition}\label{prop:lower_bound} Let $\Delta$ be a loss function. Then any comparison-based algorithm for Problem ($6$) requires $\Omega(\left|\calN\right| \log \left|\calP \right|)$ operations.
\end{proposition}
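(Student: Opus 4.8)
The plan is to run the standard decision-tree argument for comparison-based algorithms. I would model any such algorithm as a rooted tree in which every internal node performs one comparison between two score-derived quantities, and therefore has bounded branching $b = O(1)$, while every leaf is labelled with the ranking the algorithm outputs. An algorithm that performs at most $T$ comparisons on every input corresponds to a tree of depth at most $T$, which has at most $b^T$ leaves. Since a correct algorithm must report a valid optimal ranking at each leaf, the number of leaves is at least the number of \emph{distinct} optimal outputs the algorithm is ever forced to produce across all inputs. Hence it suffices to exhibit a family of inputs whose required outputs number at least $(\sP+1)^{\sN}$; taking logarithms then gives $T \geq \log_b (\sP+1)^{\sN} = \Omega(\sN \log \sP)$.

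To build this family, first fix the $\sP$ positive samples so that their scores are distinct and well separated, creating $\sP+1$ ``gaps'' into which a negative score may fall. Next I would use that a solution of Problem ($6$) is a full ranking $\bR$, i.e.\ an assignment of a position to each sample \emph{by its identity}, not merely the monotone interleaving vector $\mathbf{r}$. Scaling all scores by a large factor makes the linear score term dominate the bounded loss $\Delta$ (a proper loss takes values in a fixed bounded range), so the unique maximiser of ($6$) becomes the ranking that sorts all samples in decreasing score order. In this regime I would assign each of the $\sN$ negatives, independently, to one of the $\sP+1$ gaps by choosing its score accordingly, yielding $(\sP+1)^{\sN}$ inputs.

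These inputs have pairwise distinct optimal rankings: if two gap-assignments differ, some identity-labelled negative occupies a different gap, hence sits at a different position in $\bR$, so the two rankings differ. Consequently the optimal-ranking sets of distinct inputs are disjoint, the decision tree needs a separate leaf for each of the $(\sP+1)^{\sN}$ required outputs, and its depth is at least $\log_b (\sP+1)^{\sN} = \Omega(\sN \log \sP)$, which is the claimed bound.

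I expect the crux to be the reasoning that distinct gap-assignments genuinely force distinct \emph{required} outputs rather than collapsing to a common answer; this is precisely where the argument leans on $\Delta$ being a proper, hence bounded, loss, since scaling the scores neutralises $\Delta$ in the $\argmax$ and reduces ($6$) to plain sorting. Two minor points I would still pin down are the comparison model, ensuring each node has $O(1)$ branching so that ``depth $\ge \log_b(\text{leaves})$'' is legitimate (a ternary $<,=,>$ model only changes the constant), and perturbing scores to be distinct so that each constructed input has a unique optimum.
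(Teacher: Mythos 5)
Your proposal is correct and follows essentially the same route as the paper: the classical decision-tree/information-theoretic argument counting $(\sP+1)^{\sN}$ distinguishable outputs, each comparison yielding $O(1)$ bits, hence depth $\Omega(\sN \log \sP)$. The only difference is that you explicitly justify realizability of all $(\sP+1)^{\sN}$ outcomes via score scaling (making the linear term in ($6$) dominate the bounded loss $\Delta$), a step the paper's proof asserts more tersely with ``the scores are arbitrary,'' so your write-up is, if anything, a more careful rendering of the same argument.
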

\begin{proof}
Since the negative samples are unsorted on the input and the scores are arbitrary, every possible mapping from $\{1, \dots, |\calN|\}$ to $\{1, \dots, \left|\calP\right|+1\}$ may induce the (unique) optimal assignment of interleaving ranks. There are $\left(\left|\calP\right|+1\right)^{\left|\calN\right|}$ possibilities to be distinguished and each comparison has only two possible outcomes.
Therefore we need $\log_2\left(\left(\left|\calP\right|+1\right)^{\left|\calN\right|} \right) \in \Omega(\left|\calN\right| \log \left|\calP\right|)$ operations. \hfill \end{proof}

\section{Remaining proofs}\label{sec:proof}

Throughout the text we omitted several proofs, mostly because they are straightforward generalizations of what already appeared in \cite{yuesigir07} and \cite{chakrabartikdd08}. For the sake of completeness, we present them here.

\medskip

\noindent {{\bf Proof of Observation $1$} (of main text) {\bf :}} Let $\bR$ be any optimal solution. We check that $F({\bf X},{\bf R};{\bf w})$ increases if we swap two samples $x,y \in \cal{P}$ in $\bR$ with $ind(\bx) < ind(\by)$ and $\phi(\bx; {\bf w}) < \phi(\by; {\bf w})$ (it boils down to $ac+bd > ad+bc$ for $a > b \geq 0$ and $c > d \geq 0$). Since similar argument applies for negative samples, we can conclude that $\bR$ already has both negative and positive samples sorted decreasingly. Otherwise, one could perform swaps in $\bR$ that would increase the value of the objective, a contradiction with the optimality of $\bR$.
\hfill\ensuremath{\square}

\medskip

\noindent {{\bf Proof of Observation $2$} (of main text) {\bf :}} Recall that $opt_j$ is the highest rank with maximal value of the corresponding $f_{j'}$. It suffices to prove that for $i_{j+1} = \max \argmax f_{j+1}$ and $i_j = \max \argmax f_j$, we have $i_{j+1} \geq i_j$. Since by Observation \ref{obs:fj} functions $f_j$ inherit property (C2), we can compare the discrete derivatives of $f_j$ and $f_{j+1}$, all left to do is to formalize the discrete analogue of what seems intuitive for continuous functions. 

Assume $i_{j+1} < i_j$. Then since

\begin{align*}
f_{j+1}(i_j) - f_{j+1}(i_{j+1}) &= \sum_{i=i_{j+1}}^{i_j-1} f_{j+1}(i+1) - f_{j+1}(i) \\
&\geq \sum_{i=i_{j+1}}^{i_j-1} f_{j}(i+1) - f_{j}(i)\\
& = f_{j}(i_j) - f_{j}(i_{j+1}) \geq 0,
\end{align*}
we obtain that $i_j \in \argmax f_{j+1}$ and as $i_j > i_{j+1} = \max \argmax f_{j+1}$ and we reached the expected contradiction.
\hfill\ensuremath{\square}

\begin{lemma}\label{objfuncdecomp} The objective function $F({\bf X},{\bf R};{\bf w})$ decomposes into contributions of negative and positive samples as follows:
\begin{align*}
F({\bf X},{\bf R};{\bf w}) &= \frac{1}{\left|\cal P\right| \left| N \right|} \sum_{\bx \in \cal{P}} \sum_{\by \in \cal{N}} {\bf R}_{\bx,\by}(\phi(\bx; {\bf w}) - \phi(\by; {\bf w})) \\
 &= \sum_{\bx \in \cal{P}} c(\bx) \phi(\bx; {\bf w}) + \sum_{\by \in \cal{N}} c(\by) \phi(\by; {\bf w}),
\end{align*}
where
$$c(\bx)\! =\! \frac{\sN+2 \!-\! 2rank(\bx)}{\sP \sN}, \, c(\by) \!=\! \frac{\sP+2 \!-\! 2rank(\by)}{\sP \sN}.$$

In particular, assuming already that $\{s^+_i\}$ is sorted, and that $\bR$ is induced by a vector of interleaving ranks ${\bf r}$, one has
$$F({\bf X},{\bf R};{\bf w}) = \sum_{i=1}^{\sP} c^+_i s^+_i + \sum_{j=1}^{\sN} c^-_j s^*_j,$$
where
$$c^+_i = \frac{\sN+2 - 2r^+_i}{\sP \sN}, \qquad c^-_j = \frac{\sP+2 - 2r_j}{\sP \sN}.$$
Here $r^+_i$ stands for the interleaving rank of the $i$-th positive sample, which can be computed as $r^+_i = 1+ \lvert \{j : r_j \leq i\} \rvert$.
\end{lemma}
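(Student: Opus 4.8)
The plan is to exploit the bilinear structure of the objective and reduce everything to a counting argument over the $\pm$-pattern of the ranking. First I would start from the double sum in the first displayed line and regroup it by linearity of $\phi(\cdot; {\bf w})$, collecting the coefficient multiplying each $\phi(\bx; {\bf w})$ for $\bx \in \calP$ and each $\phi(\by; {\bf w})$ for $\by \in \calN$ separately. Concretely, the coefficient of $\phi(\bx; {\bf w})$ is $\frac{1}{\sP \sN}\sum_{\by \in \calN} {\bf R}_{\bx,\by}$ while that of $\phi(\by; {\bf w})$ is $-\frac{1}{\sP \sN}\sum_{\bx \in \calP} {\bf R}_{\bx,\by}$, the extra minus sign coming from the $\phi(\bx;{\bf w})-\phi(\by;{\bf w})$ difference.

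The core step is then to evaluate these two inner sums using the convention that ${\bf R}_{\bx,\by}=+1$ when the positive $\bx$ is ranked above the negative $\by$ and $-1$ otherwise. For a fixed positive $\bx$ with interleaving rank $rank(\bx) \in \{1, \dots, \sN+1\}$ there are exactly $rank(\bx)-1$ negatives above it and $\sN - rank(\bx) + 1$ below it, so $\sum_{\by}{\bf R}_{\bx,\by} = (\sN - rank(\bx)+1) - (rank(\bx)-1) = \sN + 2 - 2\,rank(\bx)$, which yields $c(\bx)$. The symmetric count for a fixed negative $\by$, now tallying the $rank(\by)-1$ positives above and the $\sP - rank(\by)+1$ positives below, gives $\sum_{\bx}{\bf R}_{\bx,\by} = 2\,rank(\by) - 2 - \sP$, and after the additional sign flip this produces $c(\by)$. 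This establishes the first displayed equality.

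For the second part I would specialize the decomposition to the setting where the positives are sorted and $\bR$ is induced by an interleaving-rank vector ${\bf r}$, so that $c(\bx)$ and $c(\by)$ become $c^+_i$ and $c^-_j$ once the samples are indexed by sorted position. The only genuinely new identity to justify is $r^+_i = 1 + \lvert \{j : r_j \leq i\} \rvert$. I would prove it by observing that, with both lists sorted, the $j$-th negative sits above the $i$-th positive precisely when the number of positives above that negative, namely $r_j - 1$, is strictly less than $i$, i.e. $r_j \leq i$. Hence $\lvert \{j : r_j \leq i\} \rvert$ counts exactly the negatives ranked above the $i$-th positive, and adding one recovers its interleaving rank.

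I expect the main obstacle to be bookkeeping rather than conceptual difficulty: keeping the sign of ${\bf R}_{\bx,\by}$ consistent between the positive and negative collections, and handling the ``$\leq i$ versus $< i$'' boundary in the identity for $r^+_i$ correctly, where the $+1$ offset in the interleaving-rank convention is exactly what makes the strict/non-strict inequality translate cleanly. No appeal to the {\sc qs}-suitability conditions (C1)--(C3) is needed; the statement is a purely combinatorial rewriting of the bilinear objective.
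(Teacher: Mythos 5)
Your proposal is correct, and it is precisely the ``short computation'' that the paper's proof leaves implicit: regrouping the bilinear sum and counting the $\pm 1$ entries of ${\bf R}_{\bx,\by}$ above and below each sample, with the sign bookkeeping and the $r_j \leq i$ boundary case both handled correctly. There is no methodological divergence to report, since the paper offers no alternative argument beyond asserting the verification is straightforward.
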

\begin{proof} This is straightforward to verify with a short computation.
\end{proof}

\medskip

\noindent {\bf Proof of Observation \ref{obs:obj_nice}:} We slightly modify the decomposition from Lemma \ref{objfuncdecomp} in order to incorporate the array $\{s^+_i\}$:
\begin{align*}
&F({\bf X},{\bf R};{\bf w}) \\
&= \sum_{\by \in \cal{N}} \!\left( \!c(\by)\phi(\by; {\bf w}) +  \sum_{\bx \in \cal{P}} \bR_{\bx,\by} \phi(\bx; {\bf w}) \right) \\
&=  \frac{1}{\sN \sP} \sum_{j=1}^{\sN} \left(\! (\sP\!+\! 2 \!-\! 2r_j)s^*_j \!+\!  2\sum_{i=1}^{r_j\!- \!1} s^+_i \!-\! \sum_{i=1}^{\sP} s^+_i \right).
\end{align*}
This, in combination with (C1), defines the functions $f_j$ for $j = 1, \dots, \sN$.
As for the condition (C2), we have 
$$f_j(i+1) - f_j(i) = \frac{2(s^+_i - s^*_j)}{\sN \sP} + \delta_j(i+1) - \delta_j(i),$$
where, let us be reminded, $\{s^*_j\}$ is the sorted array of scores of negative samples.
After writing analogous equality for $j+1$ and using that (C2) holds for functions $\delta_j$, we can check that the desired inequality
$$f_{j+1}(i+1) - f_{j+1}(i) \geq f_j(i+1) - f_j(i)$$
follows from $s^*_{j+1} \leq s^*_j$.

Note that for computing the $\argmax f_j(i)$ it is sufficient to compute all discrete derivatives (i.e. all the differences $f_j(i+1) - f_j(i)$); the actual values of $f_j$ are in fact not needed. For $\delta_j$ we know that one such evaluation is constant time and for $f_j$ this is also the case since we assumed to have access to $s^*_j$.
\hfill\ensuremath{\square}

\section{NDCG and Discount Functions} \label{sec:NDCG}

Chakrabarti {\em et al. }\cite{chakrabartikdd08} use a slightly modified definition of the discount $D(\cdot)$ as
$$D(i) = \begin{cases}
               1 \qquad &1 \leq i \leq 2 \\
               1/\log_2(i) \qquad &i > 2 \\               
          \end{cases}.$$
For the resulting {\sc ndcg} loss, a greedy algorithm is proposed for solving the loss augmented inference problem.
This algorithm achieves the runtime of $O(\sN \sP + \sN \log \sN)$. The authors also suggest to use a cut-off $k$ in the definition of discount $D(i)$, setting $D(i) = 0$ for $i \geq k$. With this simplification they achieved a reduced complexity of $O( (\sN + \sP) \log( \sP + \sN) + k^2)$.

However, with the above definition of a discount, it is possible
to obtain a corner-case where their proof of correctness of the greedy algorithm is not valid (specifically, there exists a counter-example
for Fact 3.4 of~\cite{chakrabartikdd08}). For the greedy algorithm to be correct, it turns out that the convexity of $D(i)$ is essential. 

\begin{remark} Observation $2$ is not true for $\Delta_{NDCG}$ with function $D(i)$ taken as
$$D(i) = \begin{cases}
               1 \qquad &1 \leq i \leq 2 \\
               1/\log_2(i) \qquad &i > 2 \\               
\end{cases}.$$
\end{remark}
\begin{proof} Consider negative samples $\bx_1$ and $\bx_2$ and a positive sample $\bx_3$ with scores $s_1 = 3\varepsilon$, $s_2 = \varepsilon$, $s_3 = 5\varepsilon$, where $\varepsilon > 0$ is small.

Note that the {\sc ndcg} loss of a ranking $\bR$ reduces to $\Delta_{NDCG}(\bR^*, \bR) = 1 - D(ind(\bx_3))$ where we used the fact that $D(1) = 1$.

The decomposition $\Delta_{NDCG}(\bR^*, \bR) = \delta_1(r_1) + \delta_2(r_2)$ holds if we set
\begin{align*}
\delta_1(1) &= \delta_2(1) = 0, \\ \delta_2(1) &= D(2) - D(3), \\ \delta_1(2) &= D(1) - D(2)=0
\end{align*}

and (possibly by looking at the proof of Observation \ref{obs:fj}) we also find values of $f_1$ and $f_2$ as
\begin{align*}
f_1(1) &= \frac{1}{2}( s_1 - s_3)+\delta_1(1) = -\varepsilon < \varepsilon \\ &= \frac{1}{2}(s_3 - s_1) + \delta_1(2) =  f_1(2) \\
f_2(1) &= \frac{1}{2}( s_2 - s_3)+\delta_2(1) = -2\varepsilon+D(2) - D(3) > 2\varepsilon \\ &= \frac{1}{2}( s_3 - s_2) + \delta_2(2) = f_2(2).
\end{align*} 
Hence $opt_1 = 2 > 1 = opt_2$, a contradiction. 

\end{proof}

\begin{table}[h]
\center{
\begin{tabular}{|l|c|r|}
\hline
Object class & 0-1 loss & {\sc ap} loss\\
\hline
Jumping  & $52.580$ & $55.230$\\
Phoning  & $32.090$ & $32.630$\\
Playing instrument  & $35.210$ & $41.180$\\
Reading  & $27.410$ & $26.600$\\
Riding bike  & $72.240$ & $81.060$\\
Running  & $73.090$ & $76.850$\\
Taking photo  & $21.880$ & $25.980$\\
Using computer  & $30.620$ & $32.050$\\
Walking  & $54.400$ & $57.090$\\
Riding horse  & $79.820$ & $83.290$\\
\hline
\end{tabular}
}
\caption{\emph{Performance of classification models trained by optimizing 0-1 loss and {\sc ap} loss, in terms of {\sc ap} on the test set for the different action classes of {\sc pascal voc} 2011 action dataset.}}
\label{tab:testAP}
\end{table}

\section{Additional Experimental Results}
\label{sec:addexpt} 

For the action classification experiments on the {\sc pascal voc} 2011 data set, we report the performance of models trained by optimizing 0-1 loss as well as {\sc ap} loss in Table~\ref{tab:testAP}. Specifically, we report the {\sc ap} on the test set for each of the 10 action classes. Similarly, we also report  the performance of models trained by optimizing 0-1 loss as well as {\sc ndcg} loss, in terms of {\sc ndcg} on the test set in Table~\ref{tab:testNDCG}. 

For our object detection experiments, we report the detection {\sc ap} in Table~\ref{tab:obj_detect} for all the 20 object categories obtained by models trained using 0-1 loss as well as {\sc ap} loss. For all object categories other than 'bottle', {\sc ap} loss based training does better than that with 0-1 loss. For 15 of the 20 object categories, we get statistically significant improvement with {\sc ap} loss trained models compared to those trained using 0-1 loss (using paired t-test with p-value less than 0.05). While optimizing {\sc ap} loss for learning gives an overall improvement of 7.12\% compared to when using 0-1 loss, for 5 classes it gives an improvement of more than 10\%. The bottom 2 classes with the least improvement obtained by {\sc ap} loss based training, 'chair' and 'bottle' seem to be difficult object categories to detect, with detectors registering very low detection {\sc ap}s. In conjunction with the overall superior performance of {\sc ap} loss for learning model parameters, the efficient method proposed by this paper makes a good case for optimizing {\sc ap} loss rather than 0-1 loss for tasks like object detection.

\begin{table}[h]
\center{
\begin{tabular}{|l|c|r|}
\hline
Object class & 0-1 loss & {\sc ndcg} loss\\
\hline
Jumping  & $86.409$ & $87.895$\\
Phoning  & $73.134$ & $76.733$\\
Playing instrument  & $81.533$ & $83.666$\\
Reading  & $74.528$ & $75.588$\\
Riding bike  & $94.928$ & $95.958$\\
Running  & $93.766$ & $93.776$\\
Taking photo  & $74.058$ & $76.701$\\
Using computer  & $79.518$ & $78.276$\\
Walking  & $89.789$ & $89.742$\\
Riding horse  & $96.160$ & $96.875$\\
\hline
\end{tabular}
}
\caption{\emph{Performance of classification models trained by optimizing 0-1 loss and {\sc ndcg} loss, in terms of {\sc ndcg} on the test set for the different action classes of {\sc pascal voc} 2011 action dataset. We conduct 5-fold cross-validation and report the mean {\sc ndcg} over the five validation sets.}}
\label{tab:testNDCG}
\end{table}

\begin{table}[h]
\center{
\begin{tabular}{|l|c|c|}
\hline
Object category & 0-1 loss & {\sc ap} loss \\ 
\hline\hline

Aeroplane  & $46.60$ & $48.18$ \\
Bicycle  & $48.53$ & $61.45$ \\
Bird  & $33.31$ & $36.73$ \\
Boat  & $15.23$ & $19.66$ \\
Bottle  & $6.10$ & $1.01$ \\
Bus  & $37.01$ & $49.51$ \\
Car  & $61.28$ & $66.78$ \\
Cat  & $38.12$ & $40.77$ \\
Chair  & $2.71$ & $3.23$ \\
Cow  & $21.06$ & $38.52$ \\
Dining-table  & $14.20$ & $39.53$ \\
Dog  & $33.55$ & $36.25$ \\
Horse  & $46.14$ & $53.86$ \\
Motorbike  & $29.97$ & $34.81$ \\
Person  & $29.58$ & $30.41$ \\
Potted-plant  & $21.27$ & $23.03$ \\
Sheep  & $11.65$ & $32.20$ \\
Sofa  & $36.66$ & $42.03$ \\
Train  & $29.71$ & $37.10$ \\
TV-monitor  & $27.31$ & $37.26$ \\

\hline
\end{tabular}
}
\caption{\emph{Performance of detection models trained by optimizing 0-1 loss and {\sc ap} loss, in terms of {\sc ap} on the test set for the different object categories of {\sc pascal voc} 2007 test set.}}
\label{tab:obj_detect}
\end{table}

{\small
\bibliographystyle{ieee}
\bibliography{main}

\begin{thebibliography}{10}\itemsep=-1pt

\bibitem{bartell1994automatic}
B.~Bartell, G.~Cottrell, and R.~Belew.
\newblock Automatic combination of multiple ranked retrieval systems.
\newblock In {\em Proceedings of the 17th annual international ACM SIGIR
  conference on Research and development in information retrieval}, 1994.

\bibitem{behlcvpr14}
A.~Behl, C.~V. Jawahar, and M.~P. Kumar.
\newblock Optimizing average precision using weakly supervised data.
\newblock In {\em CVPR}, 2014.

\bibitem{berrada2016trusting}
L.~Berrada, A.~Zisserman, and M.~P. Kumar.
\newblock Trusting {SVM} for piecewise linear {CNN}s.
\newblock In {\em International Conference on Learning Representations}, 2017.

\bibitem{burges2007learning_LambdaRank}
C.~Burges, R.~Ragno, and Q.~V. Le.
\newblock Learning to rank with nonsmooth cost functions.
\newblock In {\em Advances in neural information processing systems}, 2007.

\bibitem{caruana2004ensemble}
R.~Caruana, A.~Niculescu-Mizil, G.~Crew, and A.~Ksikes.
\newblock Ensemble selection from libraries of models.
\newblock In {\em Proceedings of the twenty-first international conference on
  Machine learning}. ACM, 2004.

\bibitem{chakrabartikdd08}
S.~Chakrabarti, R.~Khanna, U.~Sawant, and C.~Bhattacharyya.
\newblock Structured learning for non-smooth ranking losses.
\newblock In {\em KDD}, 2008.

\bibitem{dengcvpr09}
J.~Deng, W.~Dong, R.~Socher, L.-J. Li, K.~Li, and L.~Fei-Fei.
\newblock Imagenet: A large-scale hierarchical image database.
\newblock In {\em CVPR}, 2009.

\bibitem{everinghamijcv10}
M.~Everingham, L.~Van~Gool, C.~Williams, J.~Winn, and A.~Zisserman.
\newblock The {PASCAL} visual object classes ({VOC}) challenge.
\newblock {\em IJCV}, 2010.

\bibitem{girshickcvpr14}
R.~Girshick, J.~Donahue, T.~Darrell, and J.~Malik.
\newblock Rich feature hierarchies for accurate object detection and semantic
  segmentation.
\newblock In {\em CVPR}, 2014.

\bibitem{hazan2010direct_directLossStruct}
T.~Hazan, J.~Keshet, and D.~McAllester.
\newblock Direct loss minimization for structured prediction.
\newblock In {\em Advances in Neural Information Processing Systems}, 2010.

\bibitem{herschtal2004optimising}
A.~Herschtal and B.~Raskutti.
\newblock Optimising area under the {ROC} curve using gradient descent.
\newblock In {\em Proceedings of the twenty-first international conference on
  Machine learning}. ACM, 2004.

\bibitem{joachimsicml05}
T.~Joachims.
\newblock A support vector method for multivariate performance measures.
\newblock In {\em ICML}, 2005.

\bibitem{joachimsml09}
T.~Joachims, T.~Finley, and C.~Yu.
\newblock Cutting-plane training for structural {SVM}s.
\newblock {\em JMLR}, 2009.

\bibitem{kimmoml06}
D.~Kim.
\newblock Minimizing structural risk on decision tree classification.
\newblock In {\em Multi-Objective Machine Learning}. Springer, 2006.

\bibitem{krizhevsky2009learning}
A.~Krizhevsky.
\newblock Learning multiple layers of features from tiny images.
\newblock {\em Technical report, University of Toronto}.

\bibitem{lin2002support}
Y.~Lin, Y.~Lee, and G.~Wahba.
\newblock Support vector machines for classification in nonstandard situations.
\newblock {\em Machine learning}, 2002.

\bibitem{majicvpr11}
S.~Maji, L.~Bourdev, and J.~Malik.
\newblock Action recognition from a distributed representation of pose and
  appearance.
\newblock In {\em CVPR}, 2011.

\bibitem{mohapatranips14}
P.~Mohapatra, C.~V. Jawahar, and M.~P. Kumar.
\newblock Efficient optimization for average precision {SVM}.
\newblock In {\em NIPS}, 2014.

\bibitem{mohapatra2016efficient}
P.~Mohapatra, M.~Rolinek, C.~Jawahar, V.~Kolmogorov, and M.~Kumar.
\newblock Efficient optimization for rank-based loss functions.
\newblock {\em arXiv preprint arXiv:1604.08269}.

\bibitem{morik1999combining}
K.~Morik, P.~Brockhausen, and T.~Joachims.
\newblock Combining statistical learning with a knowledge-based approach: a
  case study in intensive care monitoring.
\newblock Technical report, Technical Report, SFB 475:
  Komplexit{\"a}tsreduktion in Multivariaten Datenstrukturen, Universit{\"a}t
  Dortmund, 1999.

\bibitem{shenarxiv10}
C.~Shen, H.~Li, and N.~Barnes.
\newblock Totally corrective boosting for regularized risk minimization.
\newblock {\em arXiv preprint arXiv:1008.5188}, 2010.

\bibitem{song2016training_directLossDeep}
Y.~Song, A.~Schwing, R.~Zemel, and R.~Urtasun.
\newblock Training deep neural networks via direct loss minimization.
\newblock In {\em International Conference on Machine Learning}, 2016.

\bibitem{szegedynips13}
C.~Szegedy, A.~Toshev, and D.~Erhan.
\newblock Deep neural networks for object detection.
\newblock In {\em NIPS}, 2013.

\bibitem{taskarnips03}
B.~Taskar, C.~Guestrin, and D.~Koller.
\newblock Max-margin {M}arkov networks.
\newblock In {\em NIPS}, 2003.

\bibitem{tsochantaridisicml04}
I.~Tsochantaridis, T.~Hofmann, Y.~Altun, and T.~Joachims.
\newblock Support vector machine learning for interdependent and structured
  output spaces.
\newblock In {\em ICML}, 2004.

\bibitem{uijlingsijcv13}
J.~Uijlings, K.~van~de Sande, T.~Gevers, and A.~Smeulders.
\newblock Selective search for object recognition.
\newblock {\em IJCV}, 2013.

\bibitem{yuesigir07}
Y.~Yue, T.~Finley, F.~Radlinski, and T.~Joachims.
\newblock A support vector method for optimizing average precision.
\newblock In {\em SIGIR}, 2007.

\bibitem{zeilerAdadelta}
M.~Zeiler.
\newblock {ADADELTA}: an adaptive learning rate method.
\newblock In {\em CoRR}, 2012.

\end{thebibliography}
}


\end{document}